\newtheorem{theorem}{Theorem}
\newtheorem{lemma}[theorem]{Lemma}
\newtheorem{definition}{Definition}[section]
\newtheorem{prop}{Proposition}
\let\OLDthebibliography\thebibliography
\renewcommand\thebibliography[1]{
  \OLDthebibliography{#1}
  \setlength{\itemsep}{3pt}
}
\title{Collaborative Filtering with Side Information: a Gaussian Process Perspective}
\author{
  Hyunjik Kim \\
  Department of Statistics\\
  University of Oxford\\
  \texttt{hkim@stats.ox.ac.uk} \\
  %% examples of more authors
   \And
  Xiaoyu Lu \\
  Department of Statistics\\
  University of Oxford\\
  \texttt{xiaoyu.lu@stats.ox.ac.uk} \\
  %% examples of more authors
   \And
  Seth Flaxman \\
  Department of Statistics\\
  University of Oxford\\
  \texttt{flaxman@stats.ox.ac.uk} \\
  %% examples of more authors
   \And
  Yee Whye Teh \\
  Department of Statistics\\
  University of Oxford\\
  \texttt{y.w.teh@stats.ox.ac.uk} \\
  %% \AND
  %% Coauthor \\
  %% Affiliation \\
  %% Address \\
  %% \texttt{email} \\
  %% \And
  %% Coauthor \\
  %% Affiliation \\
  %% Address \\
  %% \texttt{email} \\
  %% \And
  %% Coauthor \\
  %% Affiliation \\
  %% Address \\
  %% \texttt{email} \\
}
\begin{document}

\maketitle

\begin{abstract}
  % !TEX root = nips_paper_main.tex
We tackle the problem of collaborative filtering (CF) with side information, through the lens of Gaussian Process (GP) regression. Driven by the idea of using the kernel to explicitly model user-item similarities, we formulate the GP in a way that allows the incorporation of low-rank matrix factorisation, arriving at our model, the \textit{Tucker Gaussian Process} (TGP). Consequently, TGP generalises classical Bayesian matrix factorisation models, and goes beyond them to give a natural and elegant method for incorporating side information, giving enhanced predictive performance for CF problems. Moreover we show that it is a novel model for regression, especially well-suited to grid-structured data and problems where the dependence on covariates is close to being separable.

\end{abstract}

% !TEX root = nips_paper_main.tex
\section{Introduction}

Collaborative filtering (CF) defines a branch of techniques for tackling the following supervised learning problem: making predictions (filtering) about the preferences of a user, based on information regarding the preferences of many users (collaboration). We are given data in the form of a partially observed rating matrix $R$, where $R_{ij}$ is the rating of user $u_i$ on movie $v_j$ for $i=1,\ldots,n_1, j=1,\ldots,n_2$. CF aims to predict missing entries of $R$ by only using the observed entries. Hitherto, matrix factorisation approaches \citep{billsus1998learning,koren2009bellkor,piotte2009pragmatic,toscher2009bigchaos} have been the basis for many successful CF models. These model $R$ as a product of two low rank matrices $R \approx UV^\top$, hence $R_{ij} \approx \sum_{k} U_{ik} V_{jk}$. On the other hand, content-based filtering predicts user ratings based on attributes of users (e.g. age, sex) and items (e.g. genre). CF with side information is a combination of the two, aiming to predict user ratings using both ratings data and user/item attributes. 

There has been a wide range of work on CF with side information, mostly building on the framework of matrix factorisation. Suppose user/item side information is given in the form of feature matrices $F=[\omega(u_1),...,\omega(u_{n_1})]^\top \in \mathbb{R}^{n_1 \times r}$ and $G=[\omega'(v_1),...,\omega'(v_{n_2})]^\top \in \mathbb{R}^{n_2 \times r}$. \textit{Matrix co-factorization} \cite{singh2008relational} attempt to factorise $F,G$ and $R$ simultaneously, whereas the \textit{Regression-based Latent Factor Model} \citep{agarwal2009regression} assumes instead that $U$ and $V$ are linear in $F$ and $G$. \textit{Bayesian Matrix Factorization with Side Information} (BMFSI) \citep{porteous2010bayesian} gives an additive model in the sense that $R$ is assumed to be the sum of the standard matrix factorisation prediction $UV^\top$ and linear contributions of $F$ and $G$. \textit{Hierarchical Bayesian Matrix Factorization with Side Information} \citep{park2013hierarchical} is an extension of BFMSI with Gaussian-Wishart hyperpriors on the prior mean and variance of $U$ and $V$.

Gaussian Processes (GPs) are a popular class of Bayesian nonparametric priors over functions \citep{rasmussen2006gaussian}, and have served as flexible models across a range of machine learning tasks, e.g. regression, classification, dimensionality reduction \citep{lawrence2004gaussian} and CF \citep{lawrence2009non,yu2006stochastic}. In a regression setting with input and output pairs, a key advantage of GPs is that we can use the kernel to explicitly model similarity in the outputs between a pair of input values. We use this to model similarity between users/items given side information, forming the outset of the paper. We model the ratings as $R_{ij} \sim \mathcal{N}(f(u_i,v_j),\sigma^2)$ and $f \sim \mathcal{GP}(k_1 \times k_2)$ where kernels $k_1$ and $k_2$ model user and item similarities respectively. However, a direct GP regression application is infeasible due to its $O(N^3)$ computational cost, because for most CF problems, the number of ratings $N$ ranges from a hundred thousand to hundreds of millions. Low-rank matrix factorisation remedies this problem, and also underlies the state of the art approaches for CF. Hence it is natural to look for a connection between GPs and low-rank matrix factorisation, which is the motivation and contribution of our work.

To develop a framework for this connection, we first propose a novel approximation scheme for GPs. Our starting point is the Kronecker structure that arises naturally when working with kernels that are products of simpler constituent kernels (say each dependent on one covariate dimension). Coupled with the weight space view of GPs, we can represent a draw from the GP as a product between a random weight tensor and a collection of feature vectors (one for each constituent kernel). The weight tensor can be very large for high dimensional problems, and our proposal is to approximate it using a low-rank Tucker decomposition \citep{tucker1966some} instead. This reduces the effective number of parameters that need to be learnt, and forms the link between GPs and matrix factorisation methods in CF. Thus we arrive at our model, the Tucker Gaussian Process (TGP).

We make the following contributions:
\begin{itemize}
    \item TGP is an elegant and effective method for modelling user/item similarities via kernels to exploit side-information. As far as we know, ours is the first work to use GPs for modelling similarities via side information, with explicit correspondence between similarities and the kernel.
    \item TGP generalises classical Bayesian Matrix factorisation models \citep{SalMnihICML08,SalMnih08}, bridging the gap between matrix factorisation methods and GP methods in CF.
    \item Sub-linear scaling of TGP, achieved by stochastic gradient descent, makes it suitable for CF problems that typically have large data sets infeasible for GPs.
    \item TGP is applicable to certain regression problems where the regression function is separable in the covariates. We reason that the Tucker decomposition acts as a regulariser for the GP that helps control overfitting, and verify that TGP outperforms GPs in generalisation performance for these problems.
\end{itemize}

%We also demonstrate that the TGP is applicable to certain regression problems, for which the Tucker decomposition acts as a regulariser for the GP that helps control overfitting. We discover that our model outperforms GPs in generalisation performance for these problems.

\textbf{Outline of paper} We formulate TGP in the general regression setting in Section \ref{sec:tgp}, and describe its central application to CF with side information in Section \ref{sec:CF}, followed by related work and discussion in Section \ref{sec:disc}. We present experimental results in Section \ref{sec:experiments} and conclude in Section \ref{sec:conclusion}.

\section{Tucker Gaussian Process Regression}
\label{sec:tgp}
\subsection{Tucker GP Regression}
Consider a regression problem with inputs $x_1,\ldots, x_N \in \mathcal{X}$ and corresponding observations $y_1, \ldots y_N \in \mathbb{R}$. We assume $y_i|x_i \sim \mathcal{N}(f(x_i),\sigma^2)$ for some $f:\mathcal{X} \rightarrow \mathbb{R}$ and that the observations are independent. The aim is to learn $f$. One approach is to put a Gaussian Process (GP) prior on $f$, with zero mean and covariance $k$. The training then consists of computing the posterior GP. The problem of using this in a CF setting is that training costs $O(N^3)$ operations. $N$, the number of ratings, usually ranges from a hundred thousand to hundreds of millions in CF, making inference infeasible.

The weight space view of GPs offers a natural way of dealing with the problem: suppose there exists a feature map $\phi:\mathcal{X} \rightarrow \mathbb{R}^n$ (where $n$ is the number of features) such that $k(x,x')=\phi(x)^\top\phi(x') \hspace{1mm} \forall x,x' \in \mathcal{X}$. Then the GP is equivalent to Bayesian Linear Regression with feature vectors used for each row of the design matrix \citep{rasmussen2006gaussian}:
\begin{align} \label{eq:weightspace}
\begin{split}
    y|x,\theta \overset{ind}{\sim} \mathcal{N}(f(x),\sigma^2), &\hspace{3 mm} f(x)=\theta^\top \phi(x) \\
    \theta \sim \mathcal{N}(0,I), &\hspace{3 mm} \theta \in \mathbb{R}^n
\end{split}
\end{align}
Now training takes $O(Nn^2)$ time, and is scalable for $n \ll N$.

\begin{figure}
    \centering
    \subfloat[Tensor]{\label{fig:tensor}\includegraphics[width=40mm]{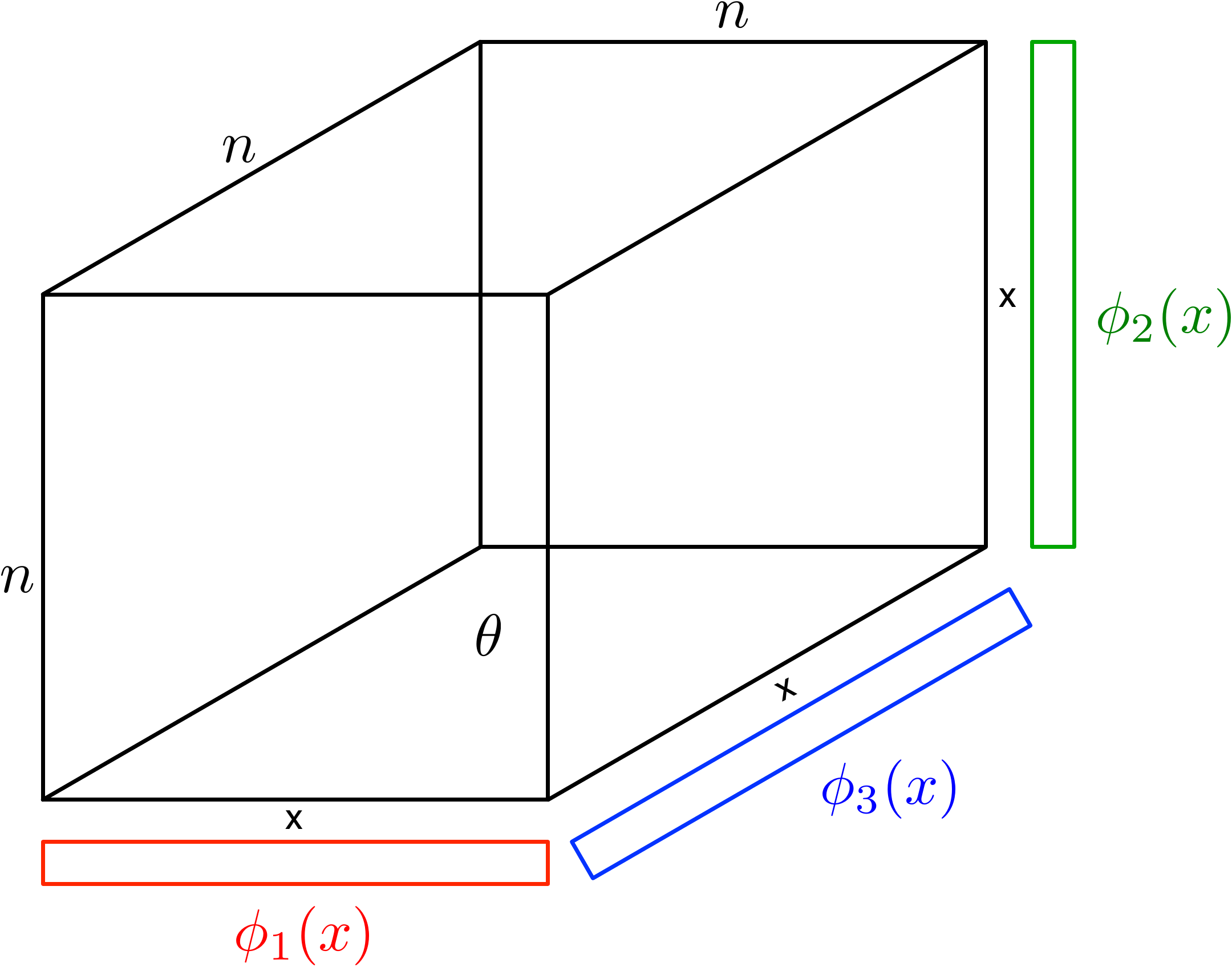}}
    \subfloat[Tucker]{\label{fig:tucker}\includegraphics[width=40mm]{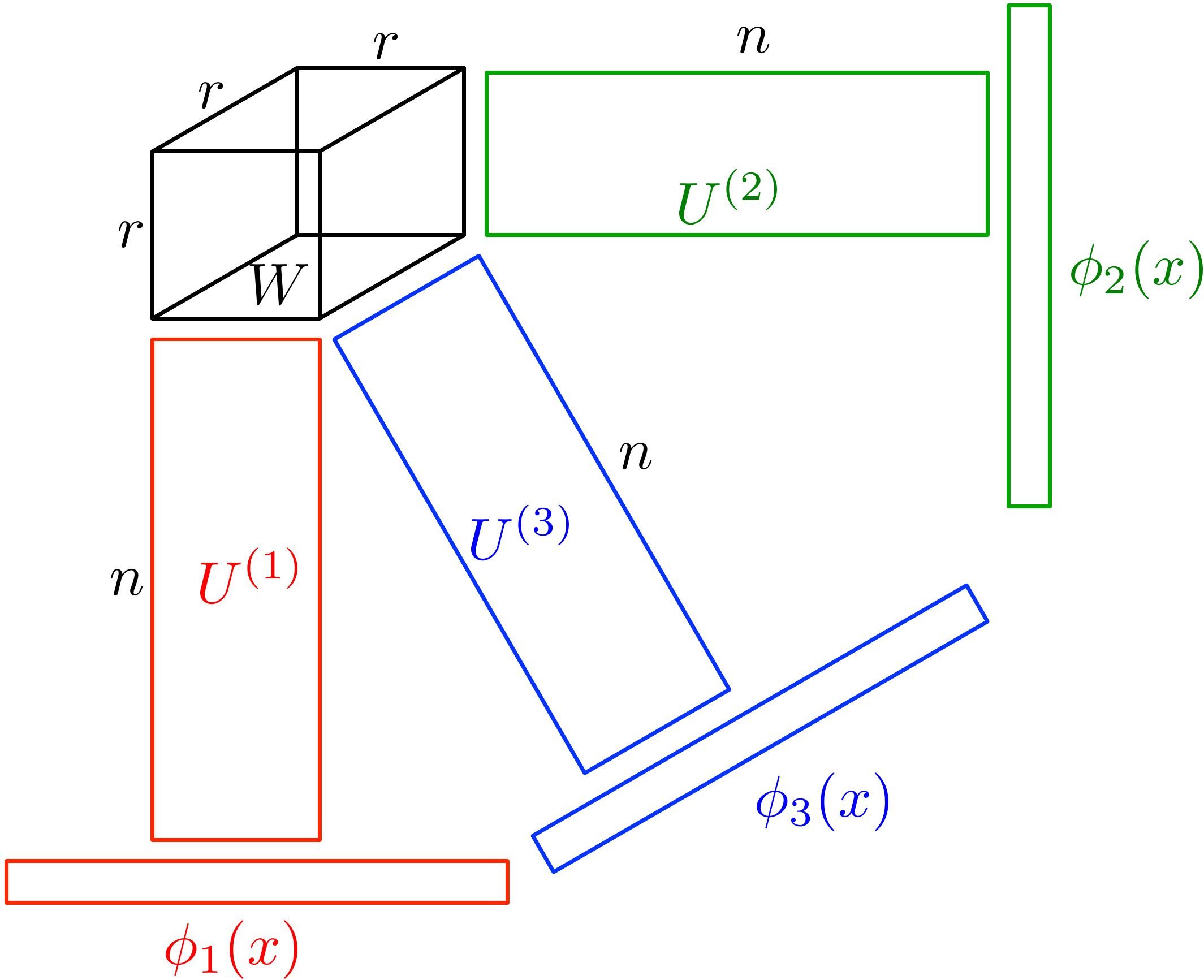}}
    \caption{Tensor \& Tucker Decomposition representation of regression function for $D=3$.}
\vspace*{-1em}
\end{figure}

Consider the case of product kernels, where the kernel can be written as follows:
\begin{equation} \label{eq:prodkernel}
    k(x_i,x_j)=\prod_{d=1}^D k_d(x_i,x_j)
\end{equation}
and suppose there are feature maps $\phi_d:\mathcal{X} \rightarrow \mathbb{R}^n$ such that $k_d(x_i,x_j) = \phi_d(x_i)^\top\phi_d(x_j)$. Then we can write $k(x_i,x_j) =  \phi(x_i)^\top \phi(x_j)$ where $\phi(x)=\otimes_{d=1}^D \phi_d(x)$ is the Kronecker product of the $\phi_d$. Returning to (\ref{eq:weightspace}):
\begin{equation} \label{eq:fullrank}
f(x) =\theta^\top \phi(x) = \theta^\top \big(\otimes_{d=1}^D \phi_d(x)\big) = \theta \times_{d=1}^D \phi_d(x)
\end{equation}
where $\theta$ has been reshaped as a $D$-dimensional tensor in $\mathbb{R}^{n \times \ldots \times n}$ in the rightmost expression, as in Figure \ref{fig:tensor}. We define the tensor product notation as follows:
\begin{align*}
\begin{split}
\theta \times_{d=1}^D \phi_d:=& \hspace{2mm} vec(\theta)^\top \otimes_{d=1}^D \phi_d 
 = \sum_{i_1,\ldots,i_D=1}^n \theta_{i_1,\ldots,i_D} \prod_{d=1}^D(\phi_d)_{i_d}
\end{split}
\end{align*}
We refer to (\ref{eq:fullrank}) as the \textit{full-rank} model, and use $\theta$ as a tensor for the rest of the paper.

This \textit{full-rank} model is problematic in high dimensions: the size of $\theta$ grows as $n^D$, so the function computation become infeasible. Thus we introduce the novel \textit{Tucker Gaussian Process} (TGP) model, where we circumvent this problem by approximating $\theta$ using a low-rank Tucker decomposition \citep{tucker1966some}. This is a tensor-matrix product between a low rank core tensor $W \in \mathbb{R}^{r \times \ldots \times r}$ of dimension $D$ and matrices $U^{(1)}, \ldots U^{(D)} \in \mathbb{R}^{n \times r}$, as in Figure \ref{fig:tucker}. We denote $\theta \approx W \times_{d=1}^D U^{(d) \top}$ where the $(i_1,\ldots,i_D)^{th}$ entry is $W \times_{d=1}^D U_{i_d}^{(d)}$ with $U_{i_d}^{(d)}$ a column vector representing the $i_d^{th}$ row of $U^{(d)}$. $n$ is the number of features in each dimension and $r$ is the rank. Note that we are free to use a different $n$ and $r$ for each dimension, but assume these are the same for convenience of notation.

We must also place suitable priors on $W$ and $U^{(d)}$ to match the iid $\mathcal{N}(0,1)$ prior on each entry of $\theta$. We place iid priors $\mathcal{N}(0,1)$ on each entry of $W$, and $\mathcal{N}(0,\sigma_u^2)$ on each entry of $U^{(d)}$. Setting $\sigma_u^2=\frac{1}{r}$, we match the first two moments of $W \times_{d=1}^D U^{(d) \top}$ and $\theta$. We then prove in Appendix \ref{apd:conv} that each entry of $W \times_{d=1}^D U^{(d) \top}$ converges in distribution to $\mathcal{N}(0,1)$ as $r \rightarrow \infty$.

In summary our TGP regression model approximating data from a GP with  product kernel (\ref{eq:prodkernel}) and homoscedastic noise is:
\begin{gather} \label{eq:tgp}
     y|x \overset{ind}{\sim} \mathcal{N}(f(x),\sigma^2),\hspace{1mm} f(x)=W\times_{d=1}^D \big(U^{(d) \top}\phi_d(x)\big)
\end{gather}
where $\phi_d:\mathcal{X} \rightarrow \mathbb{R}^n$ are feature maps such that $k_d(x_i,x_j)=\phi_d(x_i)^\top\phi_d(x_j)$, and we have iid $\mathcal{N}(0,1)$ and $\mathcal{N}(0,\frac{1}{r})$ priors on the entries of $W$ and $U^{(d)}$ respectively.

\subsection{Choice of Feature Map}
\label{sec:feature}
So far, we have assumed that the kernels $k_d$ can be written as the inner product of feature vectors: $k_d(x_i,x_j)=\phi_d(x_i)^\top \phi_d(x_j)$. We investigate the situations where this assumption holds. When this doesn't hold, we explore other choices of $\phi$ that approximate $k_d$.

\textbf{Identity features} One case where we can write kernels as inner products of features is with identity kernels $k_d(x_i,x_j)=\delta_{ij}$. The features are unit vectors: $\phi_d(x_i) = e_i := (0, \cdots, 0, 1, 0, \cdots )^\top$ with the non-zero at the $i^{th}$ entry, hence $U^{(d)\top}\phi_d(x_i) =U_i^{(d)}$. However this implies $U^{(d)} \in \mathbb{R}^{N \times r}$ (or $\mathbb{R}^{n_d \times r}$ for inputs on a grid), so for $N$ (or $n_d$) too big, computations can become too costly both in time and memory. A workaround is to use feature hashing \citep{weinberger2009feature} to obtain shorter features whose inner products are unbiased estimates of inner products of the original features. This technique can be applied to arbitrary features where the number of features is too large. See Appendix \ref{apd:hash} for details.

We can also deal with cases where the data lies on a grid using Cholesky features, or in the most general case where the data doesn't lie on a grid and $k_d$ cannot be expressed as the inner product of finite feature vectors using Random Fourier features. See Appendix \ref{apd:feature} for details.

\subsection{Learning}
\label{sec:learning}
In TGP we would like to learn the posterior distribution of $U$ and $W$. The simplest and fastest method of learning is Maximum a Posteriori (MAP), whereby we approximate the posterior with point estimates $\hat{U},\hat{W}=\arg\max_{U,W} p(U,W|y)$. For the optimisation we may use stochastic gradient descent (SGD) to approximate the full gradient, for which we get a time complexity of $O(m(nrD+r^D D))$ operations for computing the stochastic gradient on a mini-batch of size $m$, which is sublinear in $N$. See Appendix \ref{apd:learning} for a details. 

The problem with a MAP estimate for $U,W$ is that only the posterior mode is used, and the uncertainty encoded in the shape of the posterior distribution is ignored. In a Bayesian setting, we wish to use samples from the posterior and average predictions over samples. For data where we can afford an $O(N)$ runtime, we may use sampling algorithms such as Hamiltonian Monte Carlo (HMC) \citep{duane1987hybrid,neal2011mcmc}. The runtime for each HMC leapfrog step is $O(N(nrD+r^D D))$, the same time complexity as a step of full-batch gradient descent.

\section{TGP for Collaborative Filtering with Side Information} \label{sec:CF}

In order to apply TGP to CF, let us first formulate the problem using GPs. It is natural to model this as a supervised regression problem with $ R_{ij} \sim \mathcal{N}(f(u_i,v_j),\sigma^2)$ and prior $f \sim \mathcal{GP}(0,k)$ \citep{yu2006stochastic}. Note that this is particularly suitable with side information, since kernels can be interpreted as measures of similarity; we can design $k$ to encode similarities between users/movies given by the side information. Hence we may further exploit the use of GPs for addressing this problem. In particular we use a product kernel $k((u_i,v_j),(u_{i'},v_{j'})) = k_1(u_i,u_{i'})k_2(v_j,v_{j'})$ since we expect similar ratings for two user/movie pairs if the users are similar \textit{and} the movies are similar. When there is no side information, it is sensible to use identity kernels $k_1(u_i,u_{i'})=\delta_{u_iu_{i'}}$, $k_2(v_j,v_{j'})=\delta_{v_jv_{j'}}$. i.e. that distinct users and movies are not similar a priori. With side information, we may add on further kernels $\kappa_1,\kappa_2$ modelling similarity between users/movies: $k_1(u_i,u_{i'})=a_1^2\delta_{u_i u_{i'}}+b_1^2 \kappa_1(u_i,u_i')$, $k_2(v_j,v_{j'})=a_2^2\delta_{v_j v_{j'}}+b_2^2 \kappa_2(v_j,v_j')$, where $a$ and $b$ are parameters controlling the extent to which similarity in side information leads to similarity in preference. 

However, it is not immediately clear how this single GP framework relates to the matrix factorisation approach. We show that our proposed TGP forms a natural connection between these two approaches, and that we recover classic matrix factorisation models as a special case. To apply TGP, first note that we have $D=2$, and the Tucker Decomposition is simply a low-rank matrix factorisation. Using the notation $U,V$ instead of $U^{(1)},U^{(2)}$, we have that $\theta \approx UWV^\top$, hence $f(u_i,v_j)=\phi_1(u_i)^\top U W (\phi_2(v_j)^\top V)^\top$. With the identity kernel, we have unit vector features $\phi_1(u_i) = e_i \in \mathbb{R}^{n_1}$ and $\phi_2(v_j)=e_j \in \mathbb{R}^{n_2}$. TGP therefore simplifies to:
\begin{gather} \label{eq:tgpcf}
    R_{ij} \overset{ind}\sim \mathcal{N}(f(u_i,v_j),\sigma^2),  \hspace{3mm} f(u_i,v_j) = U_i^\top W V_j
\end{gather}
with iid $\mathcal{N}(0,\sigma_u^2)$ priors on each entry of $U,V$ where $U_i, V_j$ are column vectors representing the $i^{th}$ and $j^{th}$ row of $U$ and $V$ respectively. Note that with $W=I$ fixed, we recover Probabilistic Matrix Factorization (PMF) \citep{SalMnih08}, a particularly effective Bayesian model in the matrix factorization framework. 

An extension is Bayesian PMF (BPMF) \citep{SalMnihICML08} where priors are placed on the prior mean and covariance of $U_i,V_j$. Should we decide to learn $W$ in TGP, interesting parallels arise between our model and BPMF. Observe from the following that learning $W$ can be a proxy for learning the prior mean and covariance of $U$ and $V$, as is done in the BPMF model:

\begin{figure}[!htbp]
\centering
  \includegraphics[width=0.7\linewidth]{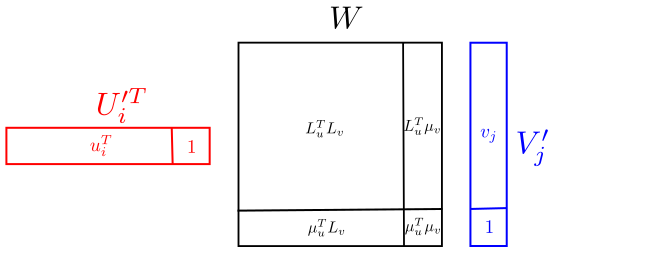}
  \caption{Bayesian PMF reparametrised. \label{fig:reparam}}
  \vspace*{-3em}
\end{figure}

\begin{gather*}
U_i \sim \mathcal{N}(\mu_u,\Lambda_u), V_j \sim \mathcal{N}(\mu_v,\Lambda_v) \Rightarrow U_i=\mu_u+L_u u_i, V_j=\mu_v+L_v v_j \\
\text{where } u_i,v_j \sim \mathcal{N}(0,I), \Lambda_u=L_u L_u^\top, \Lambda_v=L_v L_v^\top \\
\Rightarrow U_i^\top V_j =\mu_u^\top\mu_v+\mu_u^\top L_v v_j + u_i^\top L_u^\top \mu_v + u_i^\top L_u^\top L_v v_j = U_i'^\top W V'_j
\end{gather*}
where $U_i'^\top = [u_i^\top,1]$, $W=[L_u^\top L_v,L_u^\top \mu_v;\mu_u^\top L_v,\mu_u^\top \mu_v]$, $V_j'=[v_j;1]$, as displayed in Figure \ref{fig:reparam}. So a full $W$ with standard iid Gaussian priors on $U,V$ can capture the effects of modelling $U,V$ with non-zero means and full covariances for each row of $U,V$, as in BPMF.

Returning to the case with side information, suppose it is given in the form of vectors $\omega_1(u_i), \omega_2(v_j)$, and that we expect users/movies with similar $\omega$ to show similar preferences/be preferred by similar users. For example we can encode the user age into $\omega_1$ and the movie genre into $\omega_2$ and define $\kappa_1(u_i,u_i')=\omega_1(u_i)^\top \omega_1(u_{i'}),\kappa_2(v_j,v_j')=\omega_2(v_j)^\top\omega_2(v_{j'})$. The feature vector is now $\phi_d(u_i) =  [a_d e_i^\top, b_d\omega_d(u_i)^\top]^\top $ for $d=1,2$, and we have
$f(u_i,v_j) = \phi_1(u_i)^\top U W V^\top \phi_2(v_j)$.

\section{Related Work and Discussion}
\label{sec:disc}

Modelling data in the form of matrices and tensors has been studied in the field of multi-way data analysis and relational learning. The key idea here is to factorise the data tensor, with two notable forms of factorisation: PARAFAC \citep{bro1997parafac} and Tucker \citep{tucker1966some}. There are a few works in these domains that relate to GPs. InfTucker \citep{xu2011infinite} uses the Tucker decomposition directly on the data tensor, and use a non-linear transformation of the parameters $U^{(d)}$ for the regression function, contrary to TGP which is linear in the parameters. DinTucker \citep{zhe2016dintucker} tries to scale up InfTucker by splitting up the observed tensor into subarrays. \citep{imaizumi2015doubly} motivate their model using the Parafac decomposition instead of Tucker, expressing the regression function as a sum of products of local functions. These local functions are each modelled by GPs. However the TGP is motivated from a single GP on the input space. \citep{suzuki2016minimax} again model the regression function as a sum of product of local functions, which live in the RKHS of some kernel, analogous to the feature maps in TGP. However there is no mention of GPs or how their model relates to low-rank tensor decomposition. \citep{zhao2013tensor} deals with the classification problem where each input is a tensor, so there is one label per tensor. They define a GP over the space of tensors. It is unclear whether they actually use low rank tensor decomposition. For TGP we deal with regression, work in the setting where each element of the data tensor corresponds to a response, and apply Tucker decomposition to the parameters. \citep{zhe2016distributed} define a GP over the parameter space whereas TGP is a multilinear expression in the parameters and feature maps, approximating a GP over the input space. In short, these models make completely different assumptions to TGP, and thus are useful for different CF applications - none use side information (it is unclear how this would be possible given their model assumptions) and do not relate to the Bayesian matrix factorisation literature.

There are closer connections between our model and the Stochastic Relational Model \citep{yu2006stochastic} in relational learning. It is a special case of our model with $W=I$ and $D=2$. The key differences lie in the inference: we use features to build on the weight-space view of GPs, whereas \cite{yu2006stochastic} work with GPs in the function-space view. This complicates learning for kernels which cannot be expressed as an inner product of features; the authors resort to Laplace approximation for finding maximum likelihood estimates of parameters. For such kernels we use random feature maps (see Appendix \ref{apd:feature}), making learning simple and more computationally efficient.

In the domain of matrix factorisation, \cite{lawrence2009non} use a GP Latent Variable Model (GP-LVM) \citep{lawrence2004gaussian}. They learn a latent vector for each movie, and pass it through a zero-mean GP with squared exponential (SE) kernel, with one GP per user. In TGP we use one GP for all users and items. For a CF application, they incorporate side information about movies by taking the product of these kernels with a SE kernel in the movie features. Our model is more flexible in that we can take into account both user and item similarities simultaneously.

From the perspective of GP regression, we analyse the regression function of TGP to understand the regression problems for which it will be effective. Recall that the regression function $f(x)$ in (\ref{eq:tgp}) can be seen as $W\times_{d=1}^D \psi_d(x)$, where $\psi_d(x)=U^{(d) \top}\phi_d(x)$ are lower-dimensional features in $\mathbb{R}^r$ (i.e. the $U^{(d)}$ multiplied by $\phi_d(x)$ in Figure \ref{fig:tucker}). With this new formulation, we have:
\begin{equation} \label{eq:ssf}
f(x)= W \times_{d=1}^D \phi_d(x) = \sum_{i_1,\ldots,i_D = 1}^r W_{i_1 \ldots i_D}\prod_{d=1}^D (\psi_d(x))_{i_d}
\end{equation}
Hence learning $W$ and $\big(U^{(d)}\big)_{d=1}^D$ can be interpreted as learning features $\psi_d$ as well as their weights for the regression function, i.e. learning a linear combination of products of these features. In the case where $\psi_d(x)$ is only a function of the $d^{th}$ dimension of $x$, each $\prod_{d=1}^D (\psi_d(x))_{i_d}$ is separable in the dimensions. Modelling data with sums of separable functions has been studied in \cite{beylkin2009multivariate}, and its effectiveness for regression is shown by promising results on various synthetic and real data. Such additive models arise frequently in the context of ensemble learning, such as \textit{boosting} and BART \citep{chipman2010bart}, where a linear combination of many weak learners is used to build a single strong learner. We may interpret our model in this framework where $\prod_{d=1}^D(\psi_d(x))_{i_d}$ are the weak learners that share parameters, and $W_{i_1 \ldots i_D}$ are the corresponding weights.

With this alternative interpretation in mind, we may expect TGP to perform well in cases where the data displays an additive structure, with the additive components arising from a product of features on each dimension. Hence we interpret TGP as a modified GP where the approximation acts as a regulariser towards such simpler functions, which can actually lead to enhanced generalisation performance by controlling overfitting. We thus compare its performance to GPs on spatio-temporal data sets where it is reasonable to expect separability in longitude and latitude, or in time and space. 

Based on Section \ref{sec:feature}, we also see that our model is particularly well-suited to modelling grid-structured data. The difference between our model and that in \cite{saatcci2012scalable} is that we have Kronecker structure in the features $\phi$, whereas they exploit Kronecker structure on the data. Moreover, our model can deal with data not on a grid, as well as data on a grid with many missing observations, since observations are not needed for constructing the features.

Going back to CF, recall from Section \ref{sec:CF} that the low-rank matrix factorisation model has $R_{ij} \approx \sum_{k} U_{ik} V_{jk}$, a sum of a product of parameters(features) in each dimension. TGP generalises this to modelling a linear combination of products of features, hence we may expect it to perform well for this task. Also note the grid structure, since users and items are categorical variables.
\section{Experimental Results}
\label{sec:experiments}

\textbf{Collaborative Filtering} We use the MovieLens 100K data\footnote{Obtained from \url{http://grouplens.org/datasets/movielens/100k/}}, which consists of 100,000 ratings in $\{1,\ldots,5\}$ from 943 users on 1682 movies. User age, gender and occupation are given, as well as the genre of the movies. We represent this side information with binary vectors for $\omega_1(u_i),\omega_2(v_j)$ and use the formulation in (\ref{eq:cfside}) in Appendix \ref{apd:cf}. We bin the age into five categories, and there are 20 occupations and 18 genres. Thus $\omega_1(u_i) \in \mathbb{R}^{5+2+20}$ has 3 non-zero entries, one for each feature, and $\omega_2(v_j) \in \mathbb{R}^{18}$ can have multiple non-zero entries since each movie can belong to many genres. We report the mean and standard deviation of the test RMSE on the five 80:20 train test splits that come with the data, as it will offer a sensible means of comparison with other algorithms. $N$ is too large for HMC, hence we use SGD to obtain MAP estimates for the parameters, and compare different configurations: learning $W$/fixing it to be the identity and using/not using side information, along with BPMF initialised by PMF\footnote{Code obtained from \url{http://www.cs.toronto.edu/~rsalakhu/BPMF.html}}. We use mini-batches of size 100, and set $r=15$ for all models as it gives best results for PMF and BPMF. We used a grid search and cross-validation for tuning hyperparameters, the recommended method in big $N$ settings where the number of hyperparameters is not too large. SGD was not so sensitive to mini-batch size, and finding the range of hyperparameters was straightforward. See Appendix \ref{apd:cf} for details.

\begin{table}[h!]
\vspace*{-1.5em}
\begin{minipage}{\linewidth}
\centering
\caption{Test RMSE on MovieLens100K.}
\begin{tabular}{|l|l|l|}
\hline
{\bf Model}                     &{\bf Test RMSE} \\ \hline
BPMF                            &$0.9024 \pm 0.0050$      \\ 
TGP, $W=I$ (PMF)                &$0.9395 \pm 0.0115$      \\  
TGP, learn $W$                  &$0.9270 \pm 0.0097$      \\ 
TGP, $W=I$, side-info           &$0.9014 \pm 0.0061$      \\ 
TGP, learn $W$, side-info       &\textbf{0.8995 $\pm$ 0.0062}      \\ \hline
\end{tabular}
\label{tab:ml100k}
\end{minipage}
\end{table}

From Table \ref{tab:ml100k} it is evident that TGP makes good use of side information, since the RMSE decreases significantly with side information. Learning $W$ instead of fixing it helps predictive performance, but does not perform as well as BPMF. One reason is that our Gaussian prior on $W$ is not equivalent to the Gaussian-Wishart priors on the mean and variance of $U_i,V_j$ in BPMF. Another reason is that we are resorting to a MAP estimate. If we can instead sample from the posterior and average predictions over these samples, we expect enhanced predictions. However, note that using TGP with side information and learning $W$, we are able to get comparable/superior results to BPMF, even with a MAP estimate. We expect further improvements not only with sampling but also by using more sophisticated kernels that make better use of the side information; for example, use different hyperparameter coefficients for the different types of features. In so far as comparison was possible, these numbers are comparable to state-of-the-art algorithms in Section \ref{sec:disc}. A direct comparison was not possible as each use different methods for evaluation.

\textbf{Regression on spatial data} We use the California house prices data from the 1990 census\footnote{Obtained from \url{https://www.csie.ntu.edu.tw/~cjlin/libsvmtools/datasets/regression/cadata}}, which consists of average house prices for 20,640 different locations in California. We only use the covariates longitude and latitude, and whiten them along with log-transformed house prices to each have zero mean and unit variance. We chose this data set as spatial data sometimes exhibit separability in the different dimensions. Moreover the data is clustered in urban areas, hence an additive model with each component describing different sections of California may be desirable. Using a random 50:50 train test split, we report the RMSE of the model on the training set and test set after training. We first fit a GP to the data with a squared expeonential (SE) kernel on each dimension using the GPML toolbox \citep{Rasmussen:2010:GPM:1756006.1953029}, optimising the hyperparameters by type-II maximum likelihood. Then using these hyperparameters we generate RFF for $\phi$. See Appendix \ref{apd:rff} and \ref{apd:feature} for details. We implemented both the \textit{full-rank} model and TGP with $n=25,50,100,200$ on Stan \citep{stan-software:2012}, which uses HMC with the No-U-Turn Sampler (NUTS) \citep{hoffman2014no} for inference. Note that for both models $n$ refers to the length of features $\phi_d(x)$.

\begin{figure}[h!]
\vspace*{-1em}
    \centering
    \subfloat[Train RMSE]{\includegraphics[width=60mm]{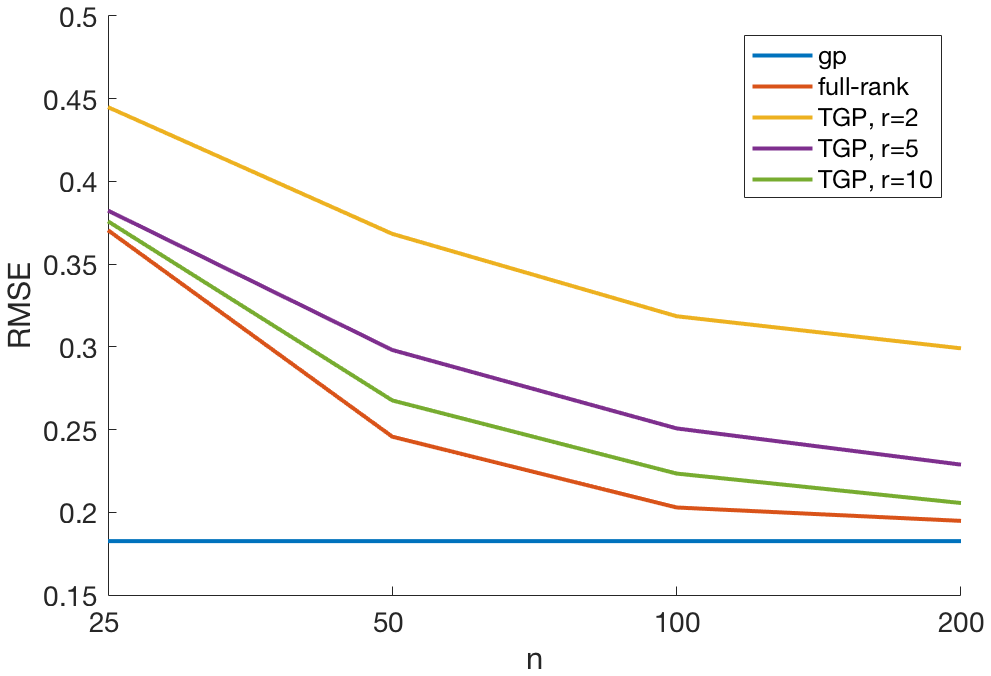}}
    \subfloat[Test RMSE]{\includegraphics[width=60mm]{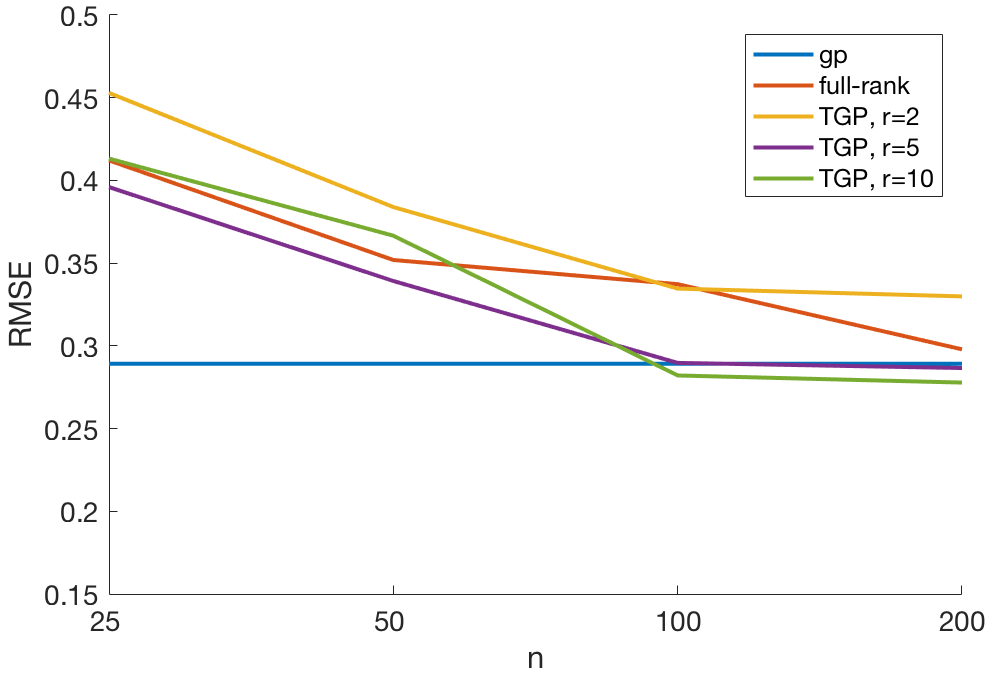}}
    \caption{RMSE for GP, \textit{full-rank}, and TGP for $r=2,5,10$ for $n=25,50,100,200$ on the California House Price data.} \label{fig:cali}
\vspace*{-1em}
\end{figure}

For TGP, we use 300 warmup iterations and a further 300 samples on 4 different chains, and use the mean prediction across the samples. For \textit{full-rank}, we take the same number of samples and chains, but only use 50 warmup draws as we diagnosed that convergence was reached by this point (looking at the Gelman-Rubin statistic \citep{gelman1992inference} and effective sample size). The convergence statistics for TGP are in Appendix \ref{apd:cali}. We can see from Figure \ref{fig:cali} that some TGP models give lower test RMSE and higher train RMSE than the GP and the \textit{full-rank} model. In fact TGP with $r=5$ consistently shows higher predictive performance than \textit{full-rank} for all values of $n$, and for $n \geq 100$ TGP with $r=10$ outperforms GP. This indicates that TGP is an effective regulariser towards simpler regression functions, namely a linear combination of separable functions. We expect bigger gains for TGP with more warmup iterations, since the convergence diagnostics suggest that TGP hasn't quite fully mixed by 300 iterations.

\begin{figure*}[!htbp]
\vspace*{-1em}
    \centering
    \subfloat[True values/TGP Predictions]{\includegraphics[width=46mm]{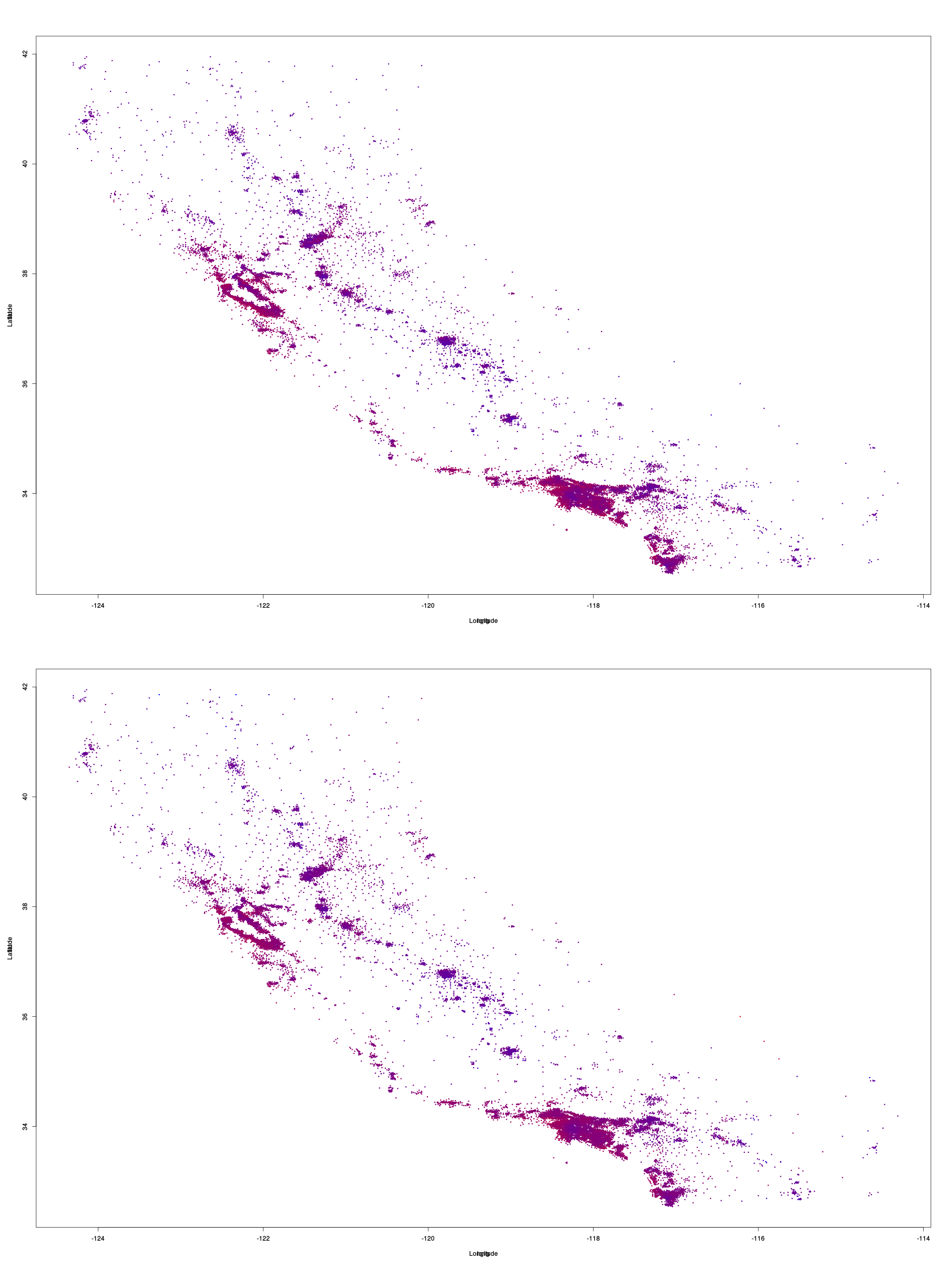}}
    \subfloat[Additive components for TGP]{\includegraphics[width=90mm]{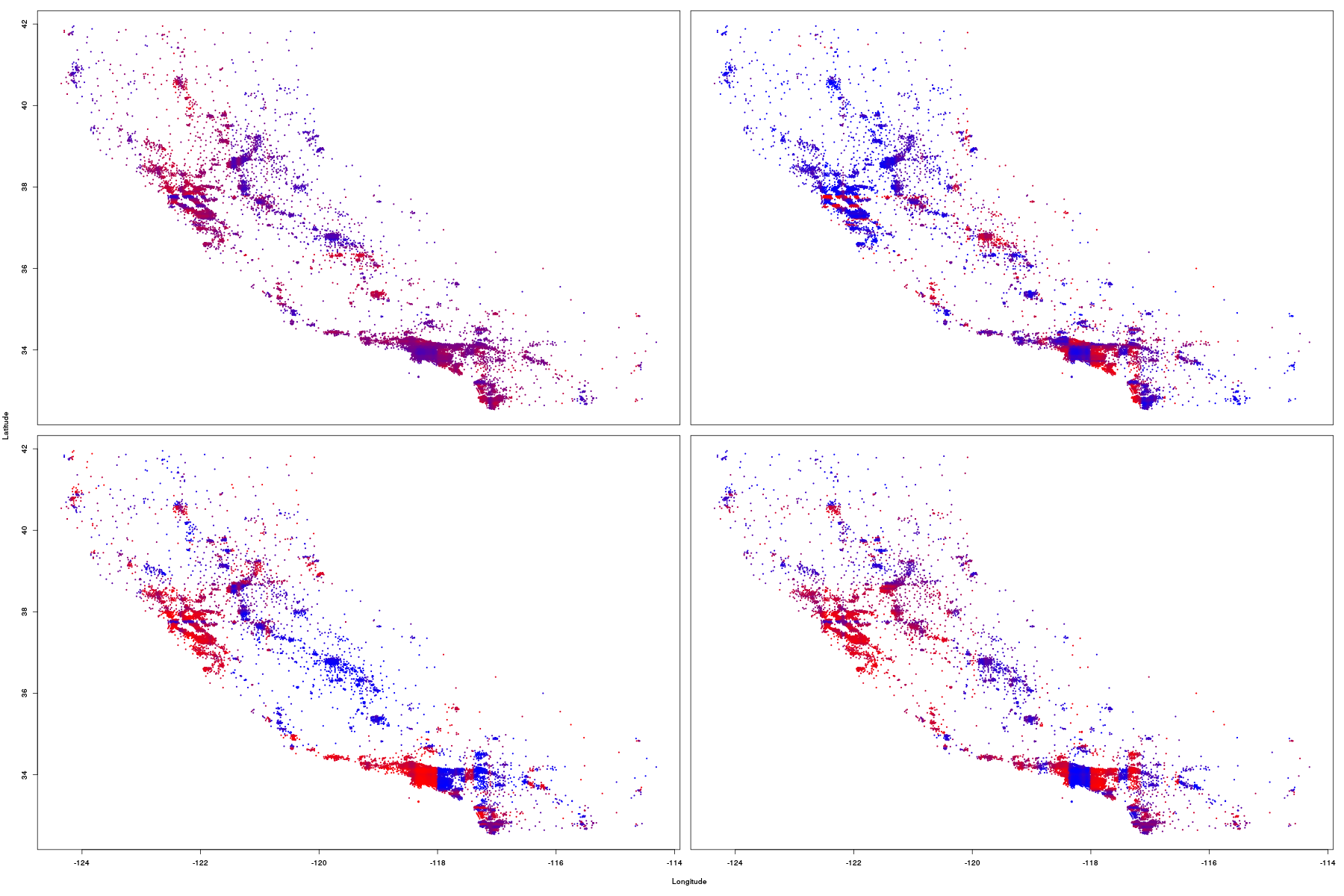} \label{fig:calipred}}
    \caption{(a) Top: Heatmap of true log house price values. Bottom: TGP predictions for $r=2, n=200$. (b) Heatmap showing the four additive components (summands in (\ref{eq:ssf})) of predictions for TGP with $r=2, n=200$. We only use the last sample in the Markov Chain to get a better indication of the structure. Red indicates high log price and blue indicates low, and the same colour scheme is applied to all four subplots. To accentuate the differences in the predictive values, we colour values by the percentile they belong to instead of a uniform colouring. See Appendix \ref{apd:cali} for the uniform colouring.} 
\end{figure*}

We further investigate the predictions of TGP by analysing the additive components in the prediction for $r=2$. We see in Figure \ref{fig:calipred} that the components are quite different. The upper two components show complementary predictions in the Bay area (North-West) and the central area, whereas the bottom two show complementary predictions in the Los Angeles area (South-East). This confirms the hypothesis that the different additive components will learn different sections of the data. See Appendix \ref{apd:cali} for zoomed in plots, and Appendix \ref{apd:wind} for experimental results on spatio-temporal data with grid structure.

\section{Conclusion}\label{sec:conclusion}
We have introduced the Tucker Gaussian Process (TGP), a regression model that regularises a GP towards simpler regression functions, in particular a linear combination of separable functions. We motivate it as a solution to Collaborative Filtering (CF), by using feature maps and a low-rank Tucker decomposition on the parameters in the weight-space view of GPs. In particular, we have highlighted the effectiveness of TGP in CF with side-information, a domain where outputs can be effectively modelled as a linear combination of functions separable in the covariates. We believe that this is the largest contribution of our paper: after showing that PMF and BPMF are special cases of the TGP, we extended it to exploit the user/item side information for better predictions; the kernel of the GP we approximate can be designed to encode similarities between different users and items, a particularly neat and natural method for modelling similarity. In doing so, we bring together matrix factorisation methods and GP methods in CF, as well as scaling up GP methods in CF. We confirm experimentally that side information enhances the predictive performance of TGP in collaborative filtering.

We have also shown that for problems where one might expect separability in the covariates such as prediction for spatio-temporal data sets, the TGP effectively controls overfitting and outperforms GPs in prediction. We also point out that exact Cholesky features can be used with TGP in the case of grid-structured data, and random feature maps can be used for arbitrary kernels.
%%%
%%%
%%% I've changed the nips_2017.sty file for the arxiv submission as follows:
%%% \RequirePackage{natbib} -> \RequirePackage[numbers]{natbib}
%%% Commented out \footnotesize\@noticestring%
%%%
%%%
\section*{Acknowledgments}
HK, XL, SF and YWT's research leading to these results has received funding from the European Research Council under the European Union's Seventh Framework Programme (FP7/2007-2013) ERC grant agreement no. 617071.

\bibliographystyle{plain}
{\small
\bibliography{main.bbl}}
\newpage
\appendix
%\input{revision_letter}
%\newpage
\newcommand{\forceindent}{\leavevmode{\parindent=2em\indent}}
\section*{Appendix}
\section{Learning Algorithms for TGP} \label{apd:learning}
We give detailed derivations of various inference algorithms for the TGP. We have a set of $N$ observations $y_i \in \mathbb{R}$ corresponding to a set of inputs $x_i \in \mathcal{X}, $ and we wish to regress $y=(y_i)_{i=1}^{N}$ on $X=(x_i)_{i=1}^{N}$. We assume that the data generating mechanism takes the form
\begin{equation*}
y=f(X) + \epsilon \hspace{10 mm} \epsilon \sim \mathcal{N}(0,\sigma^2 I_N)
\end{equation*}
where $f(X)=(f(x_i))_{i=1}^{N} \in \mathbb{R}^N$ and also that the regression function takes the following form

\begin{equation*}
f(x)=w^\top \otimes_{d=1}^D \big(U^{(d)\top}\phi_d(x)\big)
\end{equation*}

where 
\begin{itemize}
\item $W \in \mathbb{R}^{r \times \ldots \times r}$ is a D-dimensional tensor whose entries are iid $\mathcal{N}(0,\sigma_w^2)$
\item $w=vec(W)$ is the vector obtained when flattening tensor $W$, such that $W\times_{d=1}^D v_d=w^\top \otimes_{d=1}^D v_d$ $\forall v_d \in \mathbb{R}^r$
\item $(\phi_d(x))_{d=1}^D $ are the features in $\mathbb{R}^n$ extracted from $x$
\item $(U^{(d)})_{d=1}^D$ are a set of real $n \times r$ matrices with $U^{(d)}_{jl} \overset{iid}{\sim} \mathcal{N}(0,\sigma_u^2)$
\end{itemize}
We assume $n>r$, and wish to learn $w$ and the $U^{(d)}$ from the data. 

Note from the second point that $\nabla_w \big(W\times_{d=1}^D v_d\big)=\otimes_{d=1}^D v_d$. For D=2 for example, if $g(U)=s^\top U t$ for some matrix $U$ and vectors $s,t$, then $\nabla_u g(U) = s \otimes t$ where $u=vec(U)$.

First we give the complexity for calculating $f(x)$. Computing $\psi_d(x_i)=U^{(d) \top}\phi_d(x_i)$ $\forall d$ requires $O(nrD)$ time, then $w^\top \otimes_{d=1}^D \psi_d(x_i)$ takes $O(r^D)$ time. So time for a prediction given $\phi,U,w$ takes $O(nrD+r^D)$.

The quantity of interest for MAP and HMC is the log joint distribution $p(y,U,w)=p(y|U,w)p(U)p(w)$. In full this is:
\begin{equation*}
    \log p(y|U,w)+\log p(U)+\log p(w) = -\frac{1}{2\sigma^2}\sum_{i=1}^N (y_i-f(x_i))^2 - \frac{1}{2\sigma_u^2}\sum_{k=1}^D tr(U^{(k)T}U^{(k)}) - \frac{1}{2\sigma_w^2}w^\top w
\end{equation*}
This has the following derivatives:
\begin{align*}
    \nabla_w\log p(w) &=-w \\
    \nabla_{U^{(k)}}\log p(U) &=-rU^{(k)} \\
    \nabla_w \log p(y_i|U,w) &=\frac{1}{\sigma^2}(y_i-f(x_i)) \otimes_{d=1}^D \psi_d(x_i) \\
    \nabla_{u^{(k)}}\log p(y_i|U,w) &=\frac{1}{\sigma^2}(y_i-f(x_i)) \phi_k(x_i) \otimes \big(W \times_{d \neq k} \psi_d(x_i) \big) \\
\end{align*}
with the following definitions:
\begin{itemize}
\item $u^{(k)}=vec(U^{(k)}) \in \mathbb{R}^{nr}$
\item $(W \times_{d \neq k} v_d)_l := W \times_{d=1}^D v'_d $ where $v'_d=v_d$ for $d \neq l$ and $v'_l=e_l \in \mathbb{R}^r$, the unit vector with non-zero at the $l^{th}$ entry.
\end{itemize}
The last derivative holds since $f(x)=\phi_k(x)^\top U^{(k)}(W \times_{d \neq k} \psi_d(x_i))$. 
Computing $W \times_{d \neq k} \psi_d(x_i)$ takes $O(r^D)$ for each $k$, hence $O(r^D D)$ $\forall k$. So we have that using mini-batches $\{x_{t1},\ldots,x_{tm}\}$ for SGD, we have the following updates for MAP:
\begin{align*}
w &\leftarrow w+\frac{\epsilon^w}{2}\bigg(\nabla_w\log p(w_t)+\frac{N}{m} \sum_{i=1}^m \nabla_w\log p(y_{ti}|x_{ti},w,U) \bigg) \\
u^{(k)} &\leftarrow u^{(k)}+\frac{\epsilon^k}{2}\bigg(\nabla_{u^{(k)}}\log p(U)+\frac{N}{m} \sum_{i=1}^m \nabla_{u^{(k)}}\log p(y_{ti}|x_{ti},w,U) \bigg)
\end{align*}
with time complexity $O(m(nrD+r^D D))$.

Gathering the parameters into a vector $\theta=(w,U^{(1)},\ldots,U^{(k)})$, the HMC algorithm runs as follows:
\begin{enumerate}
\item Initialise MC by drawing $\theta_0=(w_0,U_0^{(1)},...,U_0^{(D)})$ from its prior distribution. 
\item For $t=0,...,T$:
\begin{enumerate}
\item Initialise $p \sim \mathcal{N}(0,I_Q)$, $V^{(k)}\sim \mathcal{N}(0,I_{n \times r}) \hspace{5mm} \forall k$, \\
\mbox{$H_t \leftarrow -\log p(w_t)-\sum_{i=1}^N \log p(y_{i}|x_{i},\theta_t)+\frac{1}{2}\sum_{k=1}^D tr(V^{(k)T} V^{(k)}) +\frac{1}{2}p^T p$} \\
$\theta=(w,U^{(1)},...U^{(D)}) \leftarrow \theta_t=(w_t,U_t^{(1)},...U_t^{(D)})$
\item For $l=1,...,L$:
\begin{enumerate}
\item \mbox{$p\leftarrow p + \frac{\epsilon_t^w}{2} \Big(\nabla_w\log p(w_t)+\sum_{i=1}^N \nabla_w\log p(y_{i}|x_{i},\theta_t)\Big)$} \\
For $k=1,...,D$: \\
\forceindent \mbox{$V^{(k)} \leftarrow V^{(k)}+\frac{\epsilon_t^k}{2}\Big(\sum_{i=1}^N \nabla_{U^{(k)}}\log p(y_{i}|x_{i},\theta) \Big)$} \\
\item $w \leftarrow w + \epsilon_t^w p$ \\
For $k=1,...,D$: \\
\forceindent $u^{(k)} \leftarrow u^{(k)}+\epsilon_t^w V^{(k)}$ \\
\item same as i.
\end{enumerate}
\item \mbox{$H^* \leftarrow -\log p(w)-\sum_{i=1}^N \log p(y_{i}|x_{i},\theta)+\frac{1}{2}\sum_{k=1}^D tr(V^{(k)T} V^{(k)}) +\frac{1}{2}p^T p$} \\
$u \sim Unif[0,1]$ \\
If $u \leq \exp(H_t-H^*)$ \\
\forceindent $\theta_{t+1}=(w_{t+1},U_{t+1}^{(1)},...,U_{t+1}^{(D)}) \leftarrow \theta=(w,U^{(1)},...,U^{(D)})$ \\
else \\
\forceindent $\theta_{t+1} \leftarrow \theta_t$
\end{enumerate}

\end{enumerate}
From previous computations, it is easy to see that each update requires $O(LN(nrD+r^D D))$ operations.

\section{Elementwise convergence of TGP to $\mathcal{N}(0,1)$}
\label{apd:conv}
\theoremstyle{definition}
\begin{definition}{\textbf{Martingale Difference Sequence}}
A martingale difference sequence with respect to a filtration $(\mathcal{F}_p)_{p \in \{0,1, \ldots, r\}}$ is a real-valued sequence of random variables $X_1,\ldots,X_r$ that satisfies: 
\begin{enumerate}
\item $X_p$ is $\mathcal{F}_p$ measurable
\item $\mathbb{E}(|X_p|)< \infty$
\item $\mathbb{E}(X_p|\mathcal{F}_{p-1})=0$ a.s.
\end{enumerate}
for all $p \in \{1, \ldots, r\}$.
\end{definition}

\begin{theorem}[\textbf{Martingale Central Limit Theorem} \citep{hall2014martingale}] 
\label{thm:mclt}
Let $X=\{X_1,\ldots,X_r\}$ be a sequence of random variables satisfying the following conditions:
\begin{enumerate}
\item $X$ is a martingale difference sequence with respect to filtration $(\mathcal{F}_p)_{p \in \{0,1, \ldots, r\}}$
\item $\sum_{p=1}^r \mathbb{E}(X_p^2|\mathcal{F}_{p-1}) \overset{p}{\rightarrow} 1$ as $r \rightarrow \infty$.
\item $\sum_{p=1}^r \mathbb{E}(X_p^2\mathbb{I}(|X_p|>\epsilon)|\mathcal{F}_{p-1}) \overset{p}{\rightarrow} 0$ as $r \rightarrow \infty$ $\forall \epsilon>0$ .
\end{enumerate}
Then the sums $S_r=\sum_{p=1}^r X_p \overset{d}{\rightarrow} \mathcal{N}(0,1)$ as $r \rightarrow \infty$.
\end{theorem}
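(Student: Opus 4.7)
The plan is to establish convergence in distribution by showing that the characteristic functions satisfy $\varphi_r(t) := \mathbb{E}[\exp(itS_r)] \to \exp(-t^2/2)$ for every $t \in \mathbb{R}$, which suffices by Lévy's continuity theorem. The overall strategy mirrors the Lindeberg-Feller proof of the classical CLT, but with conditional expectations replacing the independent factorisation.

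First I would reduce to the bounded case by truncation. Condition 3 is a conditional Lindeberg condition, and it implies that replacing each $X_p$ by its truncated version $\tilde{X}_p = X_p \mathbb{I}(|X_p| \leq \delta_r)$ (with $\delta_r \downarrow 0$ sufficiently slowly) produces a new martingale difference sequence whose partial sums differ from $S_r$ by a quantity tending to zero in probability, while preserving the conditional variance convergence in condition 2. After this step every summand is uniformly bounded, which legitimises the Taylor expansion in the next step.

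Next I would Taylor expand each factor of the product $\exp(itS_r) = \prod_{p=1}^r \exp(itX_p)$, writing
\begin{equation*}
\exp(itX_p) = 1 + itX_p - \tfrac{t^2}{2}X_p^2 + R_p,\qquad |R_p| \leq \min\bigl(t^2 X_p^2,\, \tfrac{|t|^3}{6}|X_p|^3\bigr).
\end{equation*}
Conditioning on $\mathcal{F}_{p-1}$ and invoking the martingale difference property kills the linear term, leaving $\mathbb{E}[\exp(itX_p)\mid\mathcal{F}_{p-1}] = 1 - \tfrac{t^2}{2}V_p + \rho_p$ with $V_p := \mathbb{E}[X_p^2 \mid \mathcal{F}_{p-1}]$ and remainder $\rho_p$ controlled by condition 3. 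The key bookkeeping device is the complex-valued auxiliary process $M_r(t) := \exp(itS_r) \prod_{p=1}^r \mathbb{E}[\exp(itX_p)\mid\mathcal{F}_{p-1}]^{-1}$; it is a martingale with $\mathbb{E}[M_r(t)]=1$, so $\varphi_r(t) = \mathbb{E}\bigl[\prod_{p=1}^r \mathbb{E}[\exp(itX_p)\mid\mathcal{F}_{p-1}]\bigr]$. Taking logarithms of the factors and using $\log(1+z) = z + O(z^2)$ for small $z$, this product equals $\exp\bigl(-\tfrac{t^2}{2}\sum_p V_p + \text{error}\bigr)$. By condition 2, $\sum_p V_p \to 1$ in probability; by condition 3 together with the truncation, $\max_p V_p \to 0$ in probability, which ensures the error term vanishes. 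Bounded convergence then lifts the in-probability statement to convergence of the expectation, giving $\varphi_r(t) \to e^{-t^2/2}$.

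The main obstacle will be handling the randomness of the conditional characteristic functions $\mathbb{E}[\exp(itX_p)\mid\mathcal{F}_{p-1}]$: unlike the classical independent case, these are not constants, so one cannot simply factor them out of the outer expectation. The auxiliary martingale $M_r(t)$ circumvents this, but verifying that it is a genuine martingale (invertibility of the denominators) and that the error accumulated across the product stays uniformly small requires the truncation to make each $|1 - \tfrac{t^2}{2}V_p + \rho_p|$ uniformly bounded away from zero. A secondary technical point is justifying the passage from in-probability convergence of the random product to convergence of its expectation, which is where boundedness of $|M_r(t)|$ (guaranteed post-truncation) is indispensable.
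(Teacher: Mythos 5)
First, a point of orientation: the paper does not prove this theorem at all. It is quoted from Hall and Heyde \citep{hall2014martingale} as a black-box tool for the Proposition in Appendix~\ref{apd:conv}, so there is no in-paper proof to compare yours against. What you have written is, in outline, the standard characteristic-function proof of the martingale CLT (McLeish's argument, essentially the one in the cited reference): truncate using the conditional Lindeberg condition, Taylor-expand $e^{itX_p}$, kill the linear term with the martingale-difference property, and exploit the unit-expectation auxiliary product $M_r(t)$. Those are the right ingredients, and since the statement is implicitly of triangular-array type (the whole sequence changes with $r$, as it does in the paper's application where the variances scale like $1/r$), your argument applies in the form you set it up.

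Two steps need repair before this is a proof. (i) Truncation alone does not produce a martingale difference sequence: $\mathbb{E}[X_p\mathbb{I}(|X_p|\le\delta_r)\mid\mathcal{F}_{p-1}]$ is generally nonzero, so you must re-centre, taking $\tilde X_p = X_p\mathbb{I}(|X_p|\le\delta_r)-\mathbb{E}[X_p\mathbb{I}(|X_p|\le\delta_r)\mid\mathcal{F}_{p-1}]$, and then use condition 3 to show the accumulated centring terms vanish in probability, via $|\mathbb{E}[X_p\mathbb{I}(|X_p|\le\delta)\mid\mathcal{F}_{p-1}]| = |\mathbb{E}[X_p\mathbb{I}(|X_p|>\delta)\mid\mathcal{F}_{p-1}]| \le \delta^{-1}\mathbb{E}[X_p^2\mathbb{I}(|X_p|>\delta)\mid\mathcal{F}_{p-1}]$. (ii) The identity $\varphi_r(t)=\mathbb{E}[\prod_p\mathbb{E}[e^{itX_p}\mid\mathcal{F}_{p-1}]]$ is false as written: $\mathbb{E}[M_r(t)]=1$ gives $\mathbb{E}[e^{itS_r}T_r^{-1}]=1$ with $T_r:=\prod_p\mathbb{E}[e^{itX_p}\mid\mathcal{F}_{p-1}]$ a \emph{random} quantity, and you cannot pull it outside the expectation. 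The correct finish is to show $T_r\to e^{-t^2/2}$ in probability (a deterministic limit), note that $|e^{itS_r}|\le 1$ and that after truncation $|T_r|$ is bounded above and bounded away from zero, and then write $\varphi_r(t)-e^{-t^2/2}=\mathbb{E}[e^{itS_r}(1-e^{-t^2/2}T_r^{-1})]\to 0$ by bounded convergence. Your closing paragraph gestures at exactly this, so the fix is within reach, but the displayed identity is the one step that does not hold as stated.
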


\begin{prop}
Let $n$ by $r$ matrices $U^{(d)} \overset{iid}{\sim} \mathcal{N}(0,\frac{1}{r}I)$ for $d=1,\ldots,D$, and let $W \sim \mathcal{N}(0,I)$ where $W \in \mathbb{R}^{r \times \ldots \times r}$ is a D-dimensional tensor. Then each element of $W \times_{d=1}^D U^{(d) \top}$ converges in distribution to $\mathcal{N}(0,1)$ as $r \rightarrow \infty$.
\end{prop}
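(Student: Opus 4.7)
The plan is to exploit the Gaussianity of $W$ to sidestep the Martingale CLT and reduce the claim to a routine application of Slutsky's theorem. Fix an index $(i_1,\ldots,i_D)$ and let $u^{(d)} := U^{(d)}_{i_d} \in \mathbb{R}^r$ denote the corresponding row of $U^{(d)}$, so that the element of interest is
$$T_r \;=\; \sum_{j_1,\ldots,j_D=1}^{r} W_{j_1,\ldots,j_D}\prod_{d=1}^D u^{(d)}_{j_d}.$$
First I would condition on $u := (u^{(1)},\ldots,u^{(D)})$. Given $u$, the quantity $T_r$ is a linear combination of the independent standard Gaussian entries of $W$ with deterministic coefficients $c_{j_1,\ldots,j_D}=\prod_d u^{(d)}_{j_d}$, hence $T_r \mid u$ is itself zero-mean Gaussian with variance $\sum_{j_1,\ldots,j_D}\prod_d (u^{(d)}_{j_d})^2$. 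The key algebraic observation is that this sum-of-products factorises as a product-of-sums, so the conditional variance equals exactly $\prod_{d=1}^D \|u^{(d)}\|^2$.

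Next I would show that this conditional variance tends to $1$ in probability. Since $u^{(d)}\sim\mathcal{N}(0,I/r)$, the quantity $r\|u^{(d)}\|^2$ is a sum of $r$ i.i.d.\ $\chi^2_1$ variables, so the weak law of large numbers gives $\|u^{(d)}\|^2\overset{p}{\to} 1$, and the continuous mapping theorem then yields $\prod_d \|u^{(d)}\|\overset{p}{\to} 1$. To conclude, I would either (i) use the conditional representation $T_r \overset{d}{=} Z\cdot\prod_d \|u^{(d)}\|$ with $Z\sim\mathcal{N}(0,1)$ independent of $u$ and invoke Slutsky, or (ii) compute the characteristic function directly as $\mathbb{E}[e^{itT_r}]=\mathbb{E}[e^{-t^2\prod_d \|u^{(d)}\|^2/2}]$ and conclude by bounded convergence plus L\'evy's continuity theorem.

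I do not expect any step here to be the main obstacle; the whole argument is elementary once the conditional Gaussianity is spotted. The temptation one should resist is to attack the sum $T_r$ head-on via Theorem \ref{thm:mclt}, for instance by building a martingale filtration from the lexicographically ordered entries of $W$. That route is viable but requires verifying the Lindeberg-type condition (3) for summands whose variances depend on the random scalings $u^{(d)}$, which is considerably more delicate. Conditioning on $u$ first and then deferring the randomness of $u$ to a Slutsky step avoids this difficulty entirely.
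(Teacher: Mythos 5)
Your proof is correct, and it takes a genuinely different route from the one in the paper. The paper decomposes $u^\top W v$ (for $D=2$) into a martingale difference sequence $X_p = S_p - S_{p-1}$ with $S_p = \sum_{i,j\le p} u_i W_{ij} v_j$, and then verifies the conditions of the Martingale CLT, including a Lindeberg-type condition established through an explicit fourth-moment computation via the multinomial theorem; the case $D>2$ is only asserted to follow "in a similar fashion." Your argument instead conditions on the rows $u^{(d)}$, observes that $T_r \mid u$ is exactly Gaussian with variance $\prod_d \|u^{(d)}\|^2$ (the sum-of-products/product-of-sums identity is correct since the entries of $W$ are i.i.d.), sends that variance to $1$ in probability by the weak law of large numbers, and finishes with Slutsky or a dominated-convergence argument on characteristic functions. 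This is shorter, avoids the moment bookkeeping entirely, and — importantly — handles all $D$ simultaneously with no extra work, so in that respect it is more complete than the paper's own proof. What you give up is robustness in $W$: your argument leans on the exact Gaussianity of the core tensor, whereas the martingale route would in principle tolerate non-Gaussian $W$ with matching low-order moments (the filtration there is built over the index $p$ of the $u,v,W$ entries jointly, not over lexicographically ordered entries of $W$ as you guessed, but your assessment that the Lindeberg verification is the delicate part is accurate). Conversely, your proof only needs $\prod_d\|u^{(d)}\|^2 \overset{p}{\to} 1$, so it is actually more general in the $U^{(d)}$ direction. Both proofs are valid; yours is the cleaner one for the statement as given.
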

\begin{proof}
Suppose first that $D=2$. It suffices to show that
\begin{gather*}
u,v \in \mathbb{R}^r, W \in \mathbb{R}^{r \times r}, u,v \overset{iid}{\sim} \mathcal{N}(0,I), W \sim \mathcal{N}(0,I) \\
\Rightarrow u^\top W v \overset{d}{\rightarrow} \mathcal{N}(0,1) \text{ as } r \rightarrow \infty    
\end{gather*}
We define for each $r \in \mathbb{N}$:
\begin{align*}
S_0 &= 0 \\
S_p &:= \sum_{i,j=1}^p u_i W_{ij} v_j  \\
X_p &:= S_p - S_{p-1} =  u_p W_{pp}v_p + \sum_{i=1}^{p-1} u_p W_{pi} v_i + u_i W_{ip} v_p  \label{eq:X_p}\\
\mathcal{F}_0 &:= \{\emptyset,\Omega\} \text{ where $\Omega$ is the sample space for the RVs $u,v,W$} \\
\mathcal{F}_{p} &:= \sigma(u_i,v_j,W_{ij})_{i,j=1}^p, \text{ the sigma algebra generated by these random variables} \\
\text{for }& p \in \{1,\ldots,r\}
\end{align*}
So we have that $S_r = u^\top W v$, hence it suffices to check conditions 1,2,3 in the Martingale CLT.\\
We first show 1, that $X$ is a martingale difference sequence. It is clear that $X_p$ is $\mathcal{F}_p$ measurable by definition of $\mathcal{F}_p$. To show that $X$ is integrable, we have:
\begin{align*}
    \mathbb{E}(|X_p|) 
    & \leq \mathbb{E}(|u_p W_{pp}v_p|) + \sum_{i=1}^{p-1} \mathbb{E}(|u_p W_{pi} v_i|) + \mathbb{E}(|u_i W_{ip} v_p|) \\
    & \leq \sqrt{\mathbb{E}(u_p^2 W_{pp}^2 v_p^2)} + \sum_{i=1}^{p-1} \sqrt{\mathbb{E}(u_p^2 W_{pi}^2 v_i^2)} + \sqrt{\mathbb{E}(u_i^2 W_{ip}^2 v_p^2)} \\
    & = \frac{1}{r} + (p-1)\bigg(\frac{1}{r}+\frac{1}{r}\bigg) < \infty
\end{align*}
by the inequality $\mathbb{E}(|X|)^2 \leq \mathbb{E}(X^2)$ (shown using convexity of $g:x \rightarrow x^2$ and Jensen's inequality) and independence of $u,v,W$.
Also we have:
\begin{align*}
    \mathbb{E}(X_p|\mathcal{F}_{p-1})
    &=\mathbb{E}(u_p W_{pp}v_p) + \sum_{i=1}^{p-1} \mathbb{E}(u_p W_{pi}) v_i + u_i \mathbb{E}(W_{ip} v_p)\\
    &=0
\end{align*}
since $u_p,v_p,W_{pi},W_{ip}$ are independent of $\mathcal{F}_{p-1}$ and have zero mean. Hence $X$ forms a martingale difference sequence.

To verify the next two conditions, we first prove a lemma that will help us do so. This is the generalisation of Chebyshev's inequality to higher moments:
\begin{lemma} \label{lem:chebyshev}
Suppose $X$ is a random variable with bounded $n^{th}$ moment for some $n \in \mathbb{N}$. Then \\
$\mathbb{P}(|X-\mathbb{E}(X)|>\epsilon) \leq \frac{\mathbb{E}(|X-\mathbb{E}(X)|^n)}{\epsilon^n}$ $\forall \epsilon > 0$.
\end{lemma}
\begin{proof}
Without loss of generality, assume $\mathbb{E}(X)=0$. Then
\begin{equation*}
    \mathbb{P}(|X|>\epsilon) = \mathbb{E}[\mathbb{I}(|X|> \epsilon)] =\frac{1}{\epsilon^n}\mathbb{E}[\epsilon^n\mathbb{I}(|X|> \epsilon)] \leq \frac{1}{\epsilon^n}\mathbb{E}[|X|^n\mathbb{I}(|X|> \epsilon)] \leq \frac{\mathbb{E}[|X|^n]}{\epsilon^n}
\end{equation*}
\end{proof}

Note Lemma \ref{lem:chebyshev} shows that convergence in $L^n$ implies convergence in probability. So to show conditions 2 and 3 of the martingale CLT, it suffices to show that the expectations of the quantities on the left hand sides converge to the right hand side as scalars:
\begin{enumerate}
\item [2'.] $\sum_{p=1}^r \mathbb{E}(X_p^2) \rightarrow 1$ as $r \rightarrow \infty$.
\item [3'.] $\sum_{p=1}^r \mathbb{E}(X_p^2\mathbb{I}(|X_p|>\epsilon)) \rightarrow 0$ as $r \rightarrow \infty$ $\forall \epsilon>0$ .
\end{enumerate}
Let us show 2'. In $\mathbb{E}(X_p^2)$, note that all cross terms in $\mathbb{E}(X_p^2)$ cancel since all terms have mean 0. So we have:
\begin{align*}
\mathbb{E}(X_p^2) 
&= \mathbb{E}(u_p^2 W_{pp}^2 v_p^2) + \sum_{i=1}^{p-1} \mathbb{E}(u_p^2 W_{pi}^2 v_i^2) + \mathbb{E}(u_i^2 W_{ip}^2 v_p^2) \\
&= \frac{1}{r^2}+(p-1)\bigg(\frac{1}{r^2}+\frac{1}{r^2}\bigg) = \frac{2p-1}{r^2} \\
&\Rightarrow \sum_{p=1}^r \mathbb{E}(X_p^2) = \frac{2}{r^2}\sum_{p=1}^r p - r\cdot \frac{1}{r^2}= \frac{2}{r^2}\frac{(r+1)r}{2} -\frac{1}{r} = 1
\end{align*}

To show 3', we first note that for a random variable $X$,
\begin{equation*}
\int_{\delta}^\infty \mathbb{I}(X > t) dt = (X-\delta)\mathbb{I}(X > \delta) \text{ for } \delta \in \mathbb{R}
\end{equation*}
Setting $X=X_p^2, \delta = \epsilon^2$ and rearranging we have:
\begin{align*}
X_p^2 \mathbb{I}(|X_p| > \epsilon) &= X_p^2 \mathbb{I}(X_p^2 > \epsilon^2) = \epsilon^2\mathbb{I}(X_p^2 > \epsilon^2) + \int_{\epsilon^2}^\infty \mathbb{I}(X_p^2> t) dt \\
&=\epsilon^2\mathbb{I}(|X_p| > \epsilon) + \int_{\epsilon}^\infty 2s \mathbb{I}(|X_p| > s) ds \text{ by change of variables } t=s^2 \\
\Rightarrow  \mathbb{E}[X_p^2 \mathbb{I}(|X_p| > \epsilon)]  &= \epsilon^2 \mathbb{P}(|X_p| > \epsilon) + \int_{\epsilon}^\infty 2s \mathbb{P}(|X_p| > s) ds
\end{align*}

Now we would like to use Lemma \ref{lem:chebyshev} to upper bound the right hand side. Note we want to use even $n$ such that $\mathbb{E}[|X|^n]=\mathbb{E}(X^n)$, since we know how to compute $\mathbb{E}(X_p^n)$ but not $\mathbb{E}[|X_p|^n]$. Also note that $\mathbb{P}(|X_p| > s)$ can be bounded by $\frac{\mathbb{E}(X_p^n)}{s^n}$. So we want $n>2$ for the bound on the integral to become finite. Hence we use $n=4$, and show that $\mathbb{E}(X_p^4)$ is sufficiently small so that even when we sum over $p=1,\ldots,r$, we have that the upper bound tends to 0 as $r \rightarrow \infty$.
First we compute $\mathbb{E}(X_p^4)$. Note from the multinomial theorem: 
\begin{equation*}
(x_1 + x_2  + \cdots + x_m)^n 
 = \sum_{k_1+k_2+\cdots+k_m=n} {n \choose k_1, k_2, \ldots, k_m}
  \prod_{1\le t\le m}x_{t}^{k_{t}}
\end{equation*}
where
\begin{equation*}
{n \choose k_1, k_2, \ldots, k_m}
 = \frac{n!}{k_1!\, k_2! \cdots k_m!}
\end{equation*}
Applying this to $X_p^4$ and taking the expectation, we see that the only cross terms that survive are products of even powers of the terms, namely where two of the $k_i$ are 2 and the rest are 0. \\
Noting ${n \choose 2,2}=6$, and that $\mathbb{E}(X^4)=3\sigma^4$ for $X \sim N(0,\sigma^2)$ we have:
\begin{align*}
\mathbb{E}[X_p^4]
&=\mathbb{E}[u_p^4 W_{pp}^4v_p^4 + \sum_{i=1}^{p-1} u_p^4 W_{pi}^4 v_i^4 + u_i^4 W_{ip}^4 v_p^4] \\
&+6\mathbb{E}\bigg[(u_p^2 W_{pp}^2 v_p^2)\bigg(\sum_{i=1}^{p-1} u_p^2W_{pi}^2v_i^2 +u_i^2 W_{ip}^2 v_p^2 \bigg)\bigg] \\
&+6\mathbb{E}\bigg[\sum_{i \neq j}^{p-1} u_p^2 W_{pi}^2 v_i^2 u_p^2W_{pj}^2 v_j^2 + u_i^2 W_{ip}^2 v_p^2 u_j^2 W_{jp}^2 v_p^2\bigg] \\
&+6\mathbb{E}\bigg[\sum_{i,j=1}^{p-1} u_p^2 W_{pi}^2 v_i^2 u_j^2 W_{jp}^2 v_p^2\bigg] \\
&=(2p-1) \cdot \frac{3}{r^2}\cdot 3 \cdot \frac{3}{r^2} + 6\cdot 2(p-1) \cdot \frac{3}{r^2}\cdot \frac{1}{r} \cdot \frac{1}{r}  \\
&+ 6\cdot 2 {p-1 \choose 2}\frac{3}{r^2}\cdot\frac{1}{r^2} + 6(p-1)^2\frac{1}{r^2}\cdot \frac{1}{r^2} \\
&=\frac{3}{r^4}(8p^2+8p-7)
\end{align*}
So $\mathbb{P}(|X_p|>\epsilon) \leq \frac{3}{\epsilon^4 r^4}(8p^2+8p-7)$. Hence
\begin{align*}
\mathbb{E}[X_p^2 \mathbb{I}(|X_p| > \epsilon)] 
& \leq \frac{3}{\epsilon^2 r^4}(8p^2+8p-7) + \int_{\epsilon}^\infty 2s\frac{3}{s^4 r^4}(8p^2+8p-7) ds \\
& = \frac{3}{r^4}(8p^2+8p-7)\bigg(\frac{1}{\epsilon^2} + \int_{\epsilon}^\infty \frac{2}{s^3} ds \bigg) \\
& = \frac{3C}{r^4}(8p^2+8p-7)
\end{align*}
where $\int_{\epsilon}^\infty \frac{2}{s^3} ds = C$. So 
\begin{equation*}
\sum_{p=1}^r \mathbb{E}[X_p^2 \mathbb{I}(|X_p| > \epsilon)] \leq \frac{C}{r^4}\sum_{p=1}^r 8p^2+8p-7 = O\bigg(\frac{1}{r}\bigg) \rightarrow 0  \text{ as } r \rightarrow \infty
\end{equation*}
since $\sum_{p=1}^r 8p^2+8p-7 = O(r^3)$. \\
So we have shown conditions 1,2',3', hence by martingale CLT we have that
\begin{equation*}
S_r = u^\top W v \overset{d}{\rightarrow} \mathcal{N}(0,1) \text{ as } r \rightarrow \infty
\end{equation*}
We can prove the claim for $D>2$ in a similar fashion.
\end{proof}

\section{Feature Hashing}
\label{apd:hash}
Suppose we have features $\phi(x) \in \mathbb{R}^n$. When $n$ is too large, we may use feature hashing \citep{weinberger2009feature} to reduce the dimensionality of $\phi$:
\begin{lemma}
Let $h: \{1,\ldots,n\} \rightarrow \{1,\ldots,m\}$ be a hash function for $m \ll n$. i.e. $\mathbb{P}(h(i)=j)=\frac{1}{m}$ $\forall j \in \{1,\ldots,m\}$. Also let $\xi:\{1,\ldots,n\} \rightarrow \{\pm 1\}$ be a hash function. \\ 
Define $\bar{\phi}(x) \in \mathbb{R}^m$ as follows: $\bar{\phi}_j(x)=\sum_{i:h(i)=j} \xi(i)\phi_i(x)$ \\
Then $\mathbb{E}[\bar{\phi}(x)^\top \bar{\phi}(x')]=\phi(x)^\top \phi(x')$, $Var[\bar{\phi}(x)^\top \bar{\phi}(x')]=O(\frac{1}{m})$.
\end{lemma}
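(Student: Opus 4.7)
The plan is to expand the hashed inner product as a sum over pairs of original feature indices, split into diagonal ($i=k$) and off-diagonal ($i\neq k$) contributions, and exploit the standard independence assumptions for the sign hash $\xi$ and the bin hash $h$: namely that the $\xi(i)$ are iid Rademacher, independent across $i$, and independent of $h$. These are the implicit hypotheses of the lemma and I would flag them at the outset. Under these assumptions,
\begin{equation*}
\bar\phi(x)^\top \bar\phi(x') = \sum_{j=1}^m \sum_{i:\,h(i)=j}\sum_{k:\,h(k)=j} \xi(i)\xi(k)\phi_i(x)\phi_k(x') = \sum_{i,k:\,h(i)=h(k)} \xi(i)\xi(k)\phi_i(x)\phi_k(x').
\end{equation*}
Since $\xi(i)^2=1$, the diagonal part collapses deterministically to $\sum_i \phi_i(x)\phi_i(x')=\phi(x)^\top\phi(x')$. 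Writing $Z$ for the off-diagonal remainder, $\mathbb{E}[Z]=0$ follows immediately from $\mathbb{E}[\xi(i)\xi(k)]=0$ for $i\neq k$, which yields the unbiasedness claim.

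For the variance, since the diagonal contribution is deterministic, $Var[\bar\phi(x)^\top\bar\phi(x')] = Var[Z] = \mathbb{E}[Z^2]$. I would expand $Z^2$ into a fourfold sum indexed by pairs $(i,k)$ and $(i',k')$ with $i\neq k$ and $i'\neq k'$. The key observation is that $\mathbb{E}[\xi(i)\xi(k)\xi(i')\xi(k')]$ vanishes unless each sign index appears an even number of times, so the only configurations with nonzero contribution are $(i',k')=(i,k)$ and $(i',k')=(k,i)$. In each of these the collision event reduces to the single constraint $h(i)=h(k)$, which holds with probability $1/m$ by uniformity of $h$ and its independence from $\xi$.

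Collecting the surviving terms gives
\begin{equation*}
\mathbb{E}[Z^2] = \frac{1}{m}\sum_{i\neq k}\bigl(\phi_i(x)^2\phi_k(x')^2 + \phi_i(x)\phi_k(x)\phi_i(x')\phi_k(x')\bigr),
\end{equation*}
and the final step is to bound both summands by extending the sums to all pairs $(i,k)$ and applying Cauchy--Schwarz: the first piece is at most $\|\phi(x)\|^2\|\phi(x')\|^2/m$, and the second is at most $(\phi(x)^\top\phi(x'))^2/m \leq \|\phi(x)\|^2\|\phi(x')\|^2/m$. This establishes $Var = O(1/m)$ with a constant depending on the feature norms. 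The only real subtlety, rather than a genuine technical obstacle, is correctly enumerating the two surviving index configurations in the fourth-moment computation and handling the independence of $\xi$ from $h$ cleanly so that the collision probabilities factor out; everything else is bookkeeping.
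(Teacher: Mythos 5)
The paper offers no proof of this lemma at all: it is quoted verbatim from Weinberger et al.\ (2009) as a known result, so there is no in-paper argument to compare against. Your proof is correct and is essentially the standard second- and fourth-moment computation from that reference: splitting the hashed inner product into the deterministic diagonal part $\phi(x)^\top\phi(x')$ plus a zero-mean collision term $Z$, then computing $\mathbb{E}[Z^2]$ by observing that only the index configurations $(i',k')=(i,k)$ and $(i',k')=(k,i)$ survive the sign expectation, each contributing a collision probability of $1/m$. The resulting expression $\frac{1}{m}\sum_{i\neq k}\bigl(\phi_i(x)^2\phi_k(x')^2+\phi_i(x)\phi_k(x)\phi_i(x')\phi_k(x')\bigr)$ matches the known variance formula, and your Cauchy--Schwarz bound correctly yields $O(1/m)$ with a constant depending on $\|\phi(x)\|\,\|\phi(x')\|$. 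You are also right to flag, as a hypothesis the lemma leaves implicit, that the statement only asserts marginal uniformity of $h$; one additionally needs (at least) pairwise independence of $h$ so that $\mathbb{P}(h(i)=h(k))=1/m$, four-wise independence of $\xi$ for the fourth-moment step, and independence of $\xi$ from $h$ so the collision probabilities factor out --- all of which hold under the full-independence model you adopt and under the limited-independence hashes used in practice.
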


\section{Random Fourier Features}
\label{apd:rff}
\begin{theorem}[Bochner's Theorem\citep{rudin1964fourier} ] 
\label{thm:bochner}
A stationary kernel k(d) is positive definite if and only if k(d) is the Fourier transform of a non-negative measure.
\end{theorem}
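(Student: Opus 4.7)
The plan is to prove the two implications separately, with the easy direction following from a direct calculation and the hard direction requiring a Fejér-window mollification together with a weak-limit argument.

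For the sufficient direction, assume $k(d) = \int e^{i\omega \cdot d}\, d\mu(\omega)$ for a non-negative finite measure $\mu$. For any finite set of points $x_1,\dots,x_n$ and complex coefficients $c_1,\dots,c_n$, interchanging the finite sum with the integral yields
\begin{equation*}
\sum_{j,\ell} c_j \overline{c_\ell}\, k(x_j - x_\ell) = \int \Bigl|\sum_j c_j e^{i\omega \cdot x_j}\Bigr|^2 d\mu(\omega) \ge 0,
\end{equation*}
which establishes positive definiteness.

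For the necessary direction, I will construct a candidate non-negative spectral density via a Fejér-window mollification. For each $T > 0$, define
\begin{equation*}
g_T(\omega) = \frac{1}{(2\pi)^D} \int_{\mathbb{R}^D} k(d)\, \Delta_T(d)\, e^{-i\omega \cdot d}\, dd,
\end{equation*}
where $\Delta_T(d) = \prod_{i=1}^D (1 - |d_i|/T)_+$ is the product of triangular windows. The window both compactifies the integral (so convergence does not require integrability of $k$) and is itself the autocorrelation of an indicator, $\Delta_T(d) \propto \int \mathbf{1}_{[-T/2,T/2]^D}(s)\, \mathbf{1}_{[-T/2,T/2]^D}(s+d)\, ds$. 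Substituting this factorisation and performing a change of variables, $g_T(\omega)$ rewrites as a double integral of the form $C \iint k(t-s)\, f_\omega(t)\, \overline{f_\omega(s)}\, dt\, ds$ with $f_\omega(t) = \mathbf{1}_{[-T/2,T/2]^D}(t)\, e^{-i\omega \cdot t}$. Approximating this by Riemann sums and invoking positive definiteness of $k$ gives $g_T(\omega) \ge 0$ pointwise.

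The main obstacle is the passage to the limit $T \to \infty$, since the measures $g_T(\omega)\, d\omega$ live on the non-compact space $\mathbb{R}^D$ and $k$ need not be globally integrable. I will bound the total mass by Fourier-inverting the windowed function, obtaining $\int g_T(\omega)\, d\omega = k(0) \Delta_T(0) = k(0)$, so the $g_T/k(0)$ are probability densities. Tightness follows from continuity of $k$ at the origin, which forces equicontinuity at zero of the characteristic functions $\int e^{i\omega \cdot d}\, g_T(\omega)\, d\omega = k(d)\Delta_T(d)$; Prohorov's theorem then extracts a weakly convergent subsequence with some limit measure $\mu$. Passing to the limit in the identity for the characteristic functions recovers $k(d) = \int e^{i\omega \cdot d}\, d\mu(\omega)$, completing the proof.
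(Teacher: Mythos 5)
The paper offers no proof of this statement: Bochner's theorem is quoted as a classical result with a citation to Rudin, and is used only to justify the random Fourier feature construction in Appendix D. There is therefore nothing in the paper to compare against, but your sketch is essentially the standard analytic proof on $\mathbb{R}^D$ (the cited reference actually establishes the more general locally-compact-abelian-group version by functional-analytic means, so your route is the more elementary one). Both halves of your argument are sound in outline: the sufficient direction by expanding the quadratic form under the integral, and the necessary direction via Fej\'er-window mollification, non-negativity of $g_T$ from positive definiteness of $k$, a uniform mass bound, tightness from continuity of $k$ at the origin, and extraction of a weak limit whose characteristic function is $k$.

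The one step you gloss over is the identity $\int g_T(\omega)\,d\omega = k(0)\Delta_T(0)$. This presupposes that $g_T$ is integrable and that Fourier inversion applies to the compactly supported continuous function $k\Delta_T$, neither of which is immediate from the definition of $g_T$. The standard repair is to integrate $g_T$ against a Gaussian $e^{-\epsilon|\omega|^2}$, recognise the result as a Gaussian convolution of $k\Delta_T$ evaluated at $d=0$, and let $\epsilon \to 0$: monotone convergence (using $g_T \ge 0$) gives $\int g_T \le k(0)$, hence integrability of $g_T$, after which dominated convergence upgrades this to equality and validates the inversion formula you use for the characteristic functions. With that patch, and the standing assumption that $k$ is continuous (needed for the \emph{only if} direction, and satisfied by every kernel the paper considers), your argument is complete and correct.
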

For RFF the kernel can be approximated by the inner product of random features given by samples from its spectral density, in a Monte Carlo approximation, as follows:
\begin{align*}
k(x-y) = \int_{\mathbb{R}^D} e^{iv^T(x-y)} d\mathbb{P}(v) \propto \int_{\mathbb{R}^D} p(v)e^{iv^T(x-y)} dv 
&= \mathbb{E}_{p(v)}[e^{iv^Tx}(e^{iv^Ty})^*] \\
&= \mathbb{E}_{p(v)}[Re(e^{iv^Tx}(e^{iv^Ty})^*)] \\
& \approx \frac{1}{n} \sum_{k=1}^n Re(e^{i{v_k}^Tx}(e^{i{v_k}^Ty})^*) \\
& = \mathbb{E}_b [\phi(x)^T \phi(y)]
\end{align*}
where $\phi(x) = \sqrt{\frac{2}{n}}(cos({v_1}^Tx+b_1),\ldots,cos({v_m}^Tx+b_n))$ with spectral frequencies $v_k$ iid samples from $p(v)$ and $b_k$ iid samples from $U[0,2\pi]$. \\
For a one dimensional squared exponential kernel $k(x,y)=\sigma_f^2\exp\Big(-\frac{(x-y)^2}{2l^2}\Big)$, the spectral density is $\mathcal{N}(0,l^{-2})$. So we use features $\phi(x) = \sigma_f \sqrt{\frac{2}{n}}(cos({v_1}^Tx+b_1),\ldots,cos({v_m}^Tx+b_n))$ where $v_k$ iid samples from $\mathcal{N}(0,l^{-2})$ and $b_k$ iid samples from $U[0,2\pi]$. \\

\section{Choice of Feature Map}
\label{apd:feature}

\textbf{Cholesky features} Consider data with inputs lying on a $D$-dimensional grid: $x_i \in \mathcal{X}=\times_{d=1}^D X^{(d)}$, $|X^{(d)}|=n_d$ finite, where $k_d(x_i,x_j)$ only depends on the values that $x_i,x_j$ take in $X^{(d)}$. The $X^{(d)}$ can be, for example, a finite set of points in Euclidean space, or the set of values a categorical variable can take. Then the Gram matrix $K$, containing the values of the kernel evaluated at each pair of points on the full grid, can be written as $K = \otimes_{d=1}^D K^{(d)}$, a Kronecker product of the Gram matrices $K^{(d)} \in \mathbb{R}^{n_d \times n_d}$ on each dimension \citep{saatcci2012scalable}. The same holds for the Cholesky factor $L$ where $K=LL^\top$: we have $L=\otimes_{d=1}^D L^{(d)}$ where $K^{(d)}=L^{(d)}L^{(d)\top} \in \mathbb{R}^{n_d \times n_d}$. Then we define $\phi_d(x_i)$ to be the $i^{th}$ row of $L^{(d)}$, so that $k_d(x_i,x_j)=K_{ij}^{(d)}=\phi_d(x_i)^\top \phi_d(x_j)$. In general a Cholesky decomposition for an $m$ by $m$ matrix takes $O(m^3)$ to compute. Thus $\phi_d(x_i)$ for $i=1,\ldots,N$ require $O(n_d^3)$ to compute in total. Hence the computation of features become feasible even for large $N$ as long as the $n_d$ are reasonably small.

\textbf{Random feature maps} In most cases the data does not lie on a grid, nor can $k_d$ be expressed as the inner product of finite feature vectors. In this case we can use random feature maps $\phi_d:\mathcal{X} \rightarrow \mathbb{R}^n$ where $\mathbb{E}[\phi_d(x)^\top\phi_d(x')]=k_d(x,x')$. An example is random Fourier features (RFF) \citep{rahimi2007random} for stationary kernels, where $\mathbb{V}[\phi_d(x)^\top\phi_d(x')]=O(\frac{1}{n})$. So we are introducing a further approximation $k_d(x,x') \approx \phi_d(x)^\top\phi_d(x')$, with more accurate approximations for larger $n$. This is feasible even for large $N$ as $\phi_d(x)$ only takes $O(n)$ computation. See Appendix \ref{apd:rff} for details. For non-stationary kernels, we can obtain features by Nystr{\"o}m methods \citep{williams2001using,drineas2005nystrom}, which use a set of $n$ inducing points to approximate $K$. The kernel is evaluated for each pair of inducing points and also between the inducing points and the data, giving matrices $K_{nn}$ and $K_{Nn}$. Then $\hat{K} \approx K_{Nn}K_{nn}^{-1}K_{Nn}^\top=\Phi^\top \Phi$ where $\Phi=L_{nn}^{-1}K_{Nn}^\top$. Hence the columns of $\Phi$ can be defined to be the Nystr{\"o}m features.

\section{Collaborative Filtering} \label{apd:cf}
\subsection{Using Binary Vectors for Side Information}
Note if the side information $\omega_1(u_i),\omega_2(v_j)$ are binary vectors with non-zeros at indices $\mathcal{I}_i,\mathcal{J}_j$ respectively, we have:
\begin{align*}
f(u_i,v_j) = (a_1 U_i + b_1\sum_{k \in \mathcal{I}_i}U_{n_1+k})^T W(a_2 V_j + b_2\sum_{k \in \mathcal{J}_j}V_{n_2+k})
\end{align*} 
which can be reparametrised to:
\begin{align} \label{eq:cfside}
f(u_i,v_j) = a(U_i + b\sum_{k \in \mathcal{I}_i}U_{n_1+k})^T W(V_j + c\sum_{k \in \mathcal{J}_j}V_{n_2+k})
\end{align} 

\subsection{Hyperparameter tuning for MovieLens 100K}
Hyperparameters were tuned on the following values. For PMF and fixed W TGP: $\sigma_u=[0.3,0.1,0.03],\sigma^2=[1.0,0.1,0.01,0.001],\epsilon_u = [10^{-5},10^{-6},10^{-7}]$ where $\epsilon_u,\epsilon_w$ are the step sizes for SGD on $U/V$ and $W$ respectively. We noticed that for a fixed $W$ the model overfits quickly in less than 30 epochs, whereas when learning $W$ the test RMSE decreases steadily. So we used a different grid of parameters for tuning the models where $W$ is learned: $\sigma_u=[0.3,0.1],\sigma^2=[1.0,0.75],\epsilon_u,\epsilon_w=[10^{-5},10^{-6}]$. For models with side information, we tuned on $a=[0.25,0.5,0.75], b,c=[0.15,0.3,0.45]$.

\section{California House Prices Data}
\label{apd:cali}

\begin{figure}[H]

\centering
  {\includegraphics[width=1.0\linewidth]{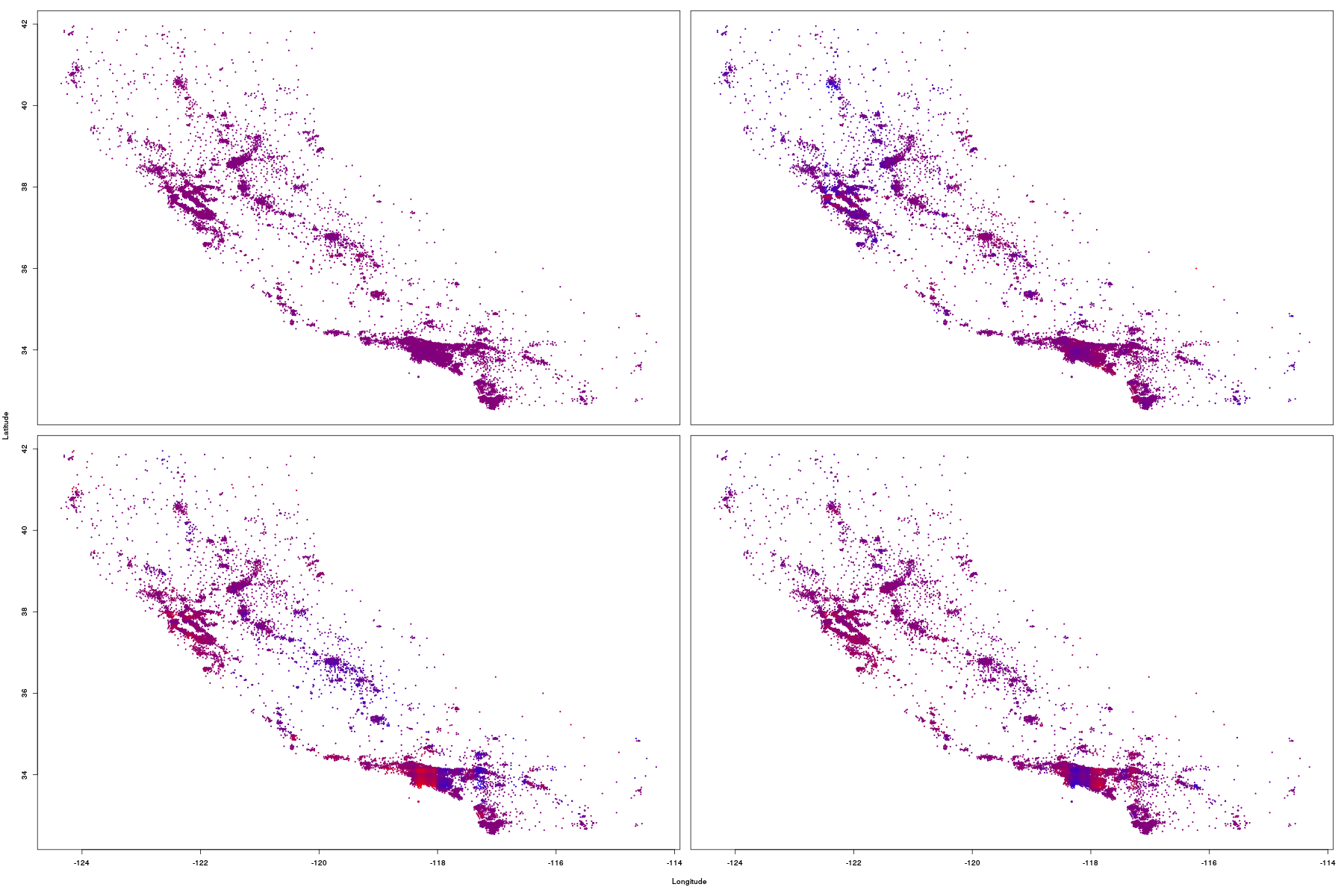}}
  \caption{Heatmap showing the four additive components of predictions of the last sample of TGP for $r=2, n=200$, using uniform colouring scheme.} \label{fig:calipredunif}
\end{figure}

\begin{figure}[H]
\centering
  {\includegraphics[width=0.8\linewidth]{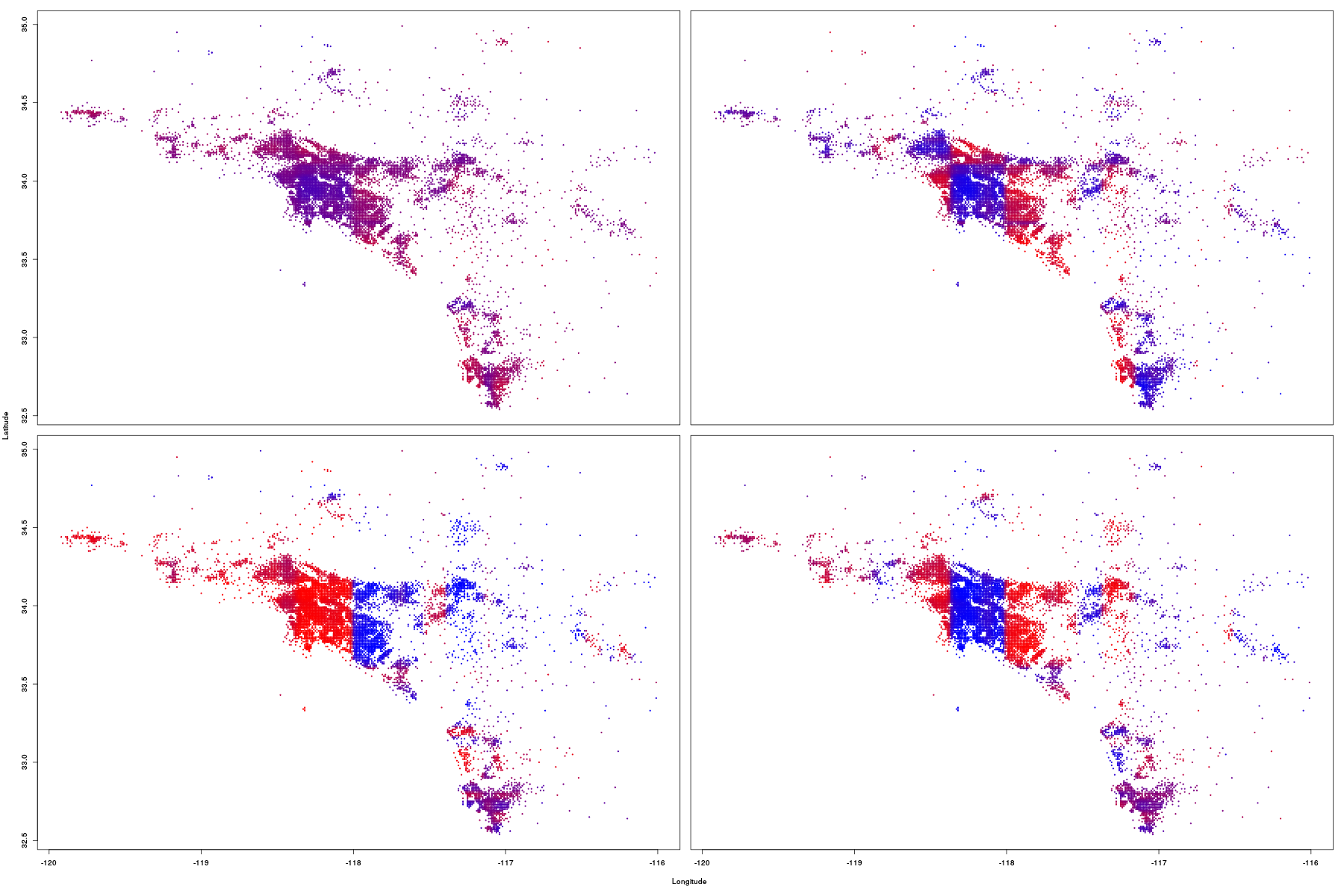}}
  \caption{Zoom in on LA area of Figure \ref{fig:calipred}.} \label{fig:calipredla}
\end{figure}

\begin{figure}[H]
\centering
  {\includegraphics[width=0.6\linewidth]{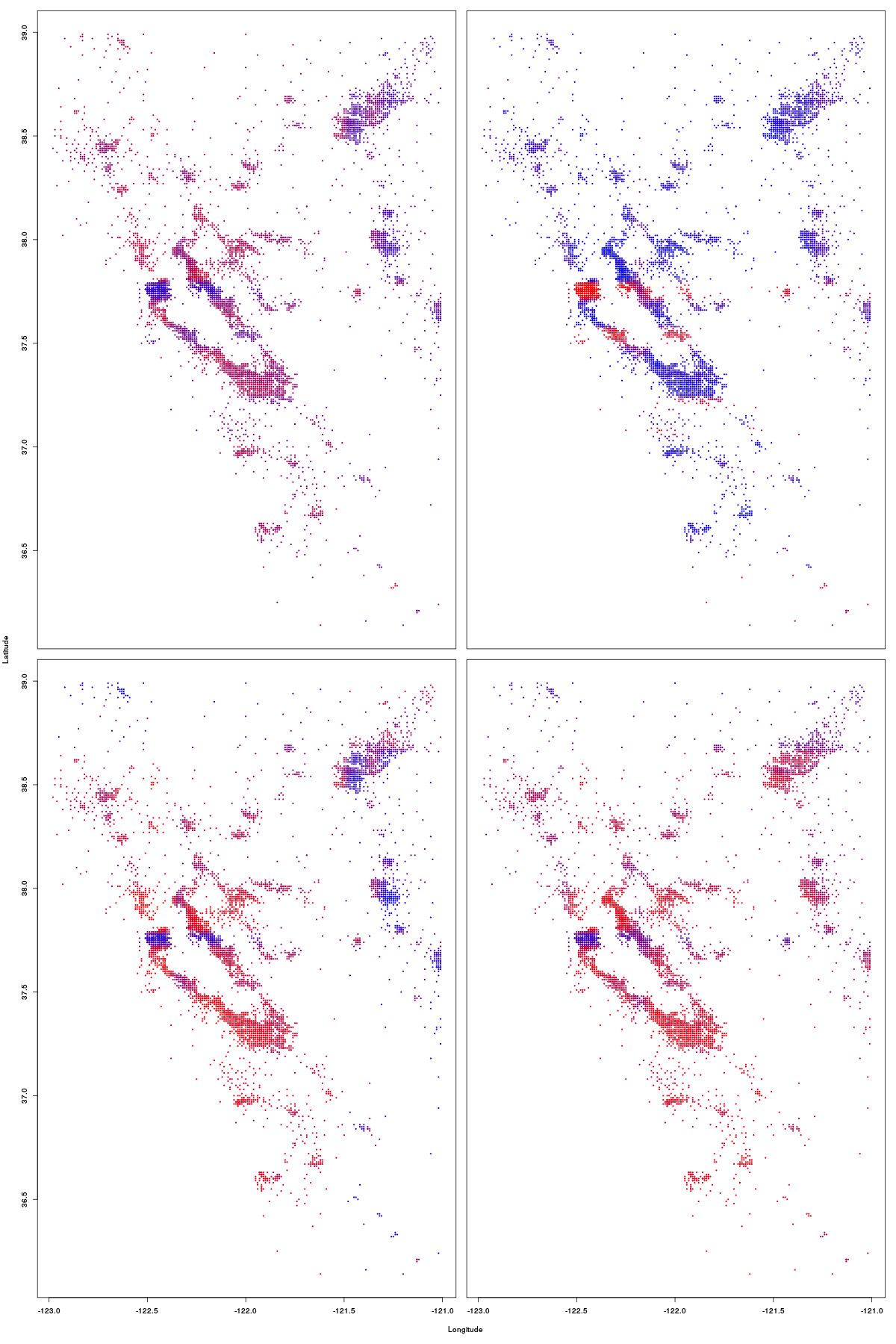}}
  \caption{Zoom in on Bay area of Figure \ref{fig:calipred}.} \label{fig:calipredbay}
\end{figure}

\begin{table}[!htpb]
\caption{Mean and standard deviation of Gelman Rubin statistic for HMC on TGP.}
\label{tab:calirhat}
\begin{center}
\begin{tabular}{|l|l|l|l|l|}
\hline
{\bf Model}         &{$n=25$}           &{$n=50$}           &{$n=100$}          &{$n=200$}\\ \hline
TGP, $r=2$       &$2.67 \pm 1.34$    &$2.55 \pm 1.37$    &$2.10 \pm 0.70$    &$1.92 \pm 0.67$\\ 
TGP, $r=5$       &$1.06 \pm 0.27$    &$1.06 \pm 0.19$    &$1.15 \pm 0.11$    &$1.11 \pm 0.11$\\ 
TGP, $r=10$      &$1.00 \pm 0.03$    &$1.02 \pm 0.04$    &$1.01 \pm 0.02$    &$1.06 \pm 0.03$\\ \hline
\end{tabular}
\end{center}
\end{table}

\begin{table}[!htpb]
\caption{Mean and standard deviation of Effective Sample Size (out of 1200) for HMC on TGP.}
\label{tab:calineff}
\begin{center}
\begin{tabular}{|l|l|l|l|l|}
\hline
{\bf Model}         &{$n=25$}           &{$n=50$}           &{$n=100$}          &{$n=200$}\\ \hline
TGP, $r=2$       &$230 \pm 459$    &$5 \pm 17$    &$11\pm 81$    &$12 \pm 74$\\ 
TGP, $r=5$       &$244 \pm 118$    &$121 \pm 70$    &$34 \pm 64$    &$42 \pm 79$\\ 
TGP, $r=10$      &$692 \pm 152$    &$196 \pm 93$    &$310 \pm 111$    &$96 \pm 165$\\ \hline
\end{tabular}
\end{center}
\end{table}

\section{Irish Wind Data}

\textbf{Regression on spatio-temporal data with grid structure} We use the Irish wind data \footnote{Obtained from \url{http://www.inside-r.org/packages/cran/gstat/docs/wind}} giving daily average wind speeds for 12 locations in Ireland between 1961 and 1978  (78,888 observations). We only use the covariates longitude, latitude and time. Note a 2D grid structure arises for the data when we treat the spatial covariates as one dimension and time as another. Again we whiten each covariate and observations, and use 20,000 randomly chosen data points for training and the rest for test. Using an isotropic SE kernel for space, and the sum of a periodic kernel and a SE kernel for time (to model annual periodicity and global trend), we first fit a GP efficiently exploiting the grid structure \citep{saatcci2012scalable}. The optimised hyperparameters are then used to construct Cholesky features. Again we use NUTS for inference on both the \textit{full-rank} model and TGP, using 4 chains with 100 warmup draws and 100 samples. 

\begin{table}[!htbp]
\centering
\begin{minipage}{\linewidth}
\centering
\caption{Train/Test RMSE on Irish wind data.}

\begin{tabular}{|l|l|l|}
\hline
{\bf Model}         &{\bf Train RMSE}   &{\bf Test RMSE} \\ \hline
GP                  &4.8822             &4.9915 \\ 
Full-rank           &4.8816             &4.9898 \\  
TGP, $r=2$          &4.9120             &4.9753 \\ 
TGP, $r=5$          &4.8996             &\textbf{4.9735} \\ 
TGP, $r=10$         &4.8913             &4.9754 \\ \hline
\end{tabular}
\label{tab:wind} 
\end{minipage}
\end{table}

All models show good convergence after 100 warmup draws, indicated by the aforementioned convergence diagnostics. Looking at Table \ref{tab:wind}, we see similar patterns in the results for the wind data as for the house prices data: the GP, which is equivalent to the \textit{full-rank} model with Cholesky features (confirmed by similar train/test RMSE), shows lower training error than TGP, whereas TGP shows superior predictive performance. These results again suggest that TGP is an effective regulariser towards simpler regression functions compared to GPs. 

\label{apd:wind}
\begin{figure}[H]
\centering
  {\includegraphics[width=1.0\linewidth]{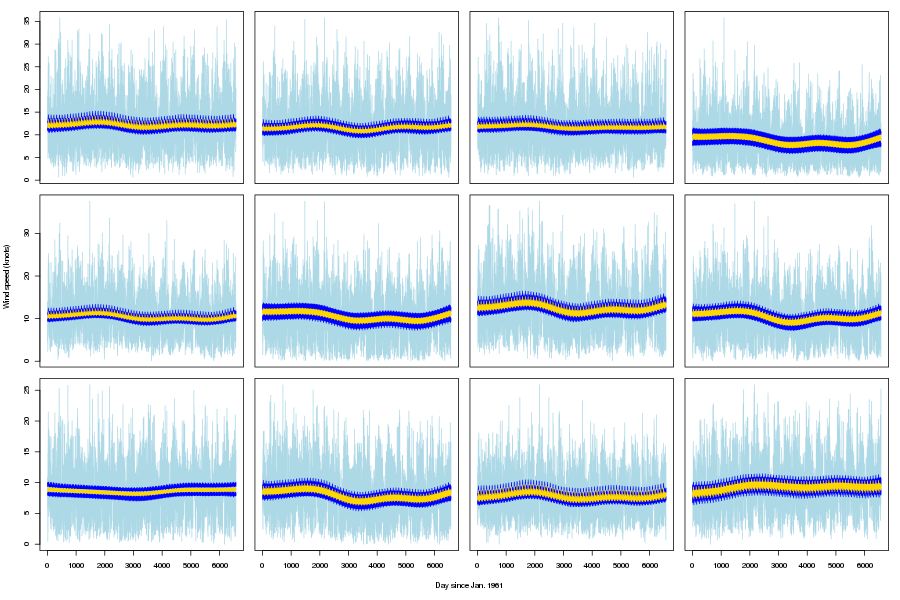}}
  {\caption{The predictions for TGP with $r=5$ on the 12 locations. The light blue lines are the true observations, the yellow are the mean predictions, and the blue show 2.5\% and 97.5\% percentiles of predictions for samples.} \label{fig:wind}}
\end{figure}

\section{Future Work}
Note that TGP can easily be extended to non-Gaussian likelihoods, since all we need for SGD and HMC is the likelihood and priors to be analytic and differentiable in the parameters. For very high dimensions where even the $r^D$ entries in $W$ are undesirable, we can use a sparse representation of $W$ with say $Q$ non-zeros. All derivations carry forward, and we obtain time complexity $O(m(nrD+QD))$ for gradient computations in SGD. It would be interesting to compare TGP against other algorithms suitable for high-dimensional data. Furthermore, it would be desirable to have a sampling algorithm that scales sub-linearly, to benefit from the Bayesian approach to learning when $N$ is large and HMC is infeasible. One example is Stochastic Gradient Langevin Dynamics (SGLD) \citep{welling2011bayesian} among many other Stochastic Gradient MCMC \citep{ma2015complete} algorithms. We have also tried mean-field variational inference, but results were poor compared to HMC. Moreover a more efficient method of tuning hyperparameters than by cross-validation would be ideal, especially for big $N$ settings with many kernel hyperparameters. One potential solution is the fully Bayesian approach, learning hyperparameters directly by imposing priors and sampling or MAP. We leave these extensions for future work.

\end{document}